\renewcommand{\algocf@caption@boxruled}{%
  \hrule
  \hbox to \hsize{%
    \vrule\hskip-0.4pt
    \vbox{
       \vskip\interspacetitleboxruled%
       \unhbox\algocf@capbox\hfill
       \vskip\interspacetitleboxruled
       }%
     \hskip-0.4pt\vrule%
   }\nointerlineskip%
}%
\newcommand{\reviseyq}[1]{#1} 
\newcommand{\nop}[1]{}
\let\oldnl\nl
\newcommand{\nonl}{\renewcommand{\nl}{\let\nl\oldnl}}
\newtheorem{theorem}{Theorem}
\newtheorem{definition}{Definition}
\newtheorem{example}{Example}
\newtheorem{lemma}{Lemma}
\newtheorem{pdef}{Problem Definition}
\DeclareMathOperator*{\argmax}{arg\,max}
\begin{document}
\sloppy

\title{KBQA: Learning Question Answering over QA Corpora and Knowledge Bases}

\nop{
\numberofauthors{6}
\author{
\alignauthor
Wanyun Cui\\
\affaddr{Shanghai Key Laboratory of Data Science\\School of Computer Science, Fudan University}
       \email{wanyuncui1@gmail.com}
\alignauthor
Yanghua Xiao\thanks{\small Correspondence author. This paper was supported by the National
Key Basic Research Program of China under No.2015CB358800,
by the National NSFC (No.61472085, U1509213), by Shanghai
Municipal Science and Technology Commission foundation key
project under No.15JC1400900, by Shanghai Municipal Science
and Technology project under No.16511102102.}\\
       \affaddr{Shanghai Key Laboratory of Data Science\\School of Computer Science, Fudan University}
       \email{wanyuncui1@gmail.com}
\alignauthor
Haixun Wang\\
       \affaddr{Google Research}\\
       \email{haixun@google.com}
\and
\alignauthor
Yangqiu Song\\
       \affaddr{UIUC}\\
       \email{yqsong@illinois.edu}
       \alignauthor
Seungwon Hwang\\
       \affaddr{POSTECH}\\
       \email{swhwang@postech.ac.kr}
\alignauthor
Wei Wang\\
       \affaddr{Fudan University}\\
     \email{weiwang1@fudan.edu.cn}
}
}

\numberofauthors{1}
 \author{\alignauthor Wanyun Cui$^{\S}$ \; Yanghua Xiao$^{\S}
 $ \;   Haixun Wang$^{\ddag}$ \; Yangqiu Song$^{\P}$   \; Seung-won Hwang$^{\sharp}$
 \;  Wei Wang$^{\S}$\;   \\
   \affaddr {$^{\S}$Shanghai Key Laboratory of Data Science, School of Computer Science, Fudan University\\
   \affaddr {$^{\ddag}$Facebook \; $^{\P}$HKUST \;  $^{\sharp}$Yonsei University} \\
   \affaddr {wanyuncui1@gmail.com, shawyh@fudan.edu.cn, haixun@gmail.com, yqsong@cse.ust.hk, seungwonh@yonsei.ac.kr, weiwang1@fudan.edu.cn}
}}

\nop{
\author{ Wanyun Cui, Yanghua Xiao, \thanks{\small Correspondence author. This paper was supported by the National
Key Basic Research Program of China under No.2015CB358800,
by the National NSFC (No.61472085, U1509213), by Shanghai
Municipal Science and Technology Commission foundation key
project under No.15JC1400900, by Shanghai Municipal Science
and Technology project under No.16511102102.} \\
Shanghai Key Laboratory of Data Science, School of Computer Science, Fudan University\\
wanyuncui1@gmail.com, shawyh@fudan.edu.cn
\AND
Haixun Wang\\
Facebook, USA \\
haixun@gmail.com
\And
Seung-won Hwang\\
Yonsei University\\
seungwonh@yonsei.ac.kr
\And
Wei Wang\\
Shanghai Key Laboratory of Data Science\\
School of Computer Science, Fudan Uni.\\
weiwang1@fudan.edu.cn
}
}

\maketitle
\begin{abstract}
Question answering (QA) has become a popular way for humans to access billion-scale
knowledge bases. Unlike web search, QA
over a knowledge base gives out accurate and concise results, provided
that natural language questions can be understood and mapped
precisely to structured queries over the knowledge base.  The
challenge, however, is that a human can ask one question in
many different ways. Previous approaches have natural limits due to their representations: rule based approaches only understand a
small set of ``canned'' questions, while keyword based or synonym based approaches cannot fully understand the questions.
 In this paper, we design a new kind of question representation: \textbf{templates}, over a billion scale knowledge base and a million scale QA corpora.  For
example, for questions about a city's population, we learn templates such as {\tt What's the population of \$city?}, {\tt How many
people are there in \$city?}. We learned 27 million templates for 2782 intents. Based on these templates, our QA system KBQA
effectively supports binary factoid questions, as well as complex questions which are composed of a series of binary factoid questions.
Furthermore, we expand
predicates in RDF knowledge base, which boosts the coverage of
knowledge base by 57 times. Our QA system beats all other state-of-art works on both effectiveness and efficiency over QALD benchmarks.
\end{abstract}

\section{Introduction}
\label{sec:intro}

Question Answering (QA) has drawn a lot of research interests. A QA
system is designed to answer a particular type of
questions~\cite{burger2001issues}. One of the most important types of
questions is the factoid question (FQ), which asks about objective
facts of an entity.  A particular type of FQ, known as the {binary
  factoid question (BFQ)}~\cite{agichtein2005analysis}, asks about a property
of an entity. For example, {\tt how many people are there in
  Honolulu?}  If we can answer BFQs, then we will be able to
answer other types of questions, such as 1) ranking questions: {\tt
  which city has the 3rd largest population?}; 2) comparison
questions: {\tt which city has more people, Honolulu or New
  Jersey?}; 3) listing questions: {\tt list cities ordered
  by population} etc. In addition to BFQ and its variants, we can answer a complex factoid question such as {\tt when was Barack
  Obama's wife born?} This can be answered by combining the answers of two BFQs: {\tt who's Barack Obama's wife?} (Michelle Obama) and {\tt when
  was Michelle Obama born?} (1964). We define a complex factoid question as a question
that can be decomposed into a series of BFQs. In
this paper, we focus on BFQs and complex factoid questions.





QA over a knowledge base has a long history.
In recent years, large scale knowledge bases become available,
including Google's Knowledge Graph,
Freebase~\cite{bollacker2008freebase}, YAGO2~\cite{hoffart2011yago2},
etc., greatly increase the importance and the commercial value of a QA
system.  Most of such knowledge bases adopt RDF as data
format, and they contain millions or billions of SPO triples
($S$, $P$, and $O$ denote subject, predicate, and object
respectively).

\begin{figure}[h]
\centering
\includegraphics[scale=0.45]{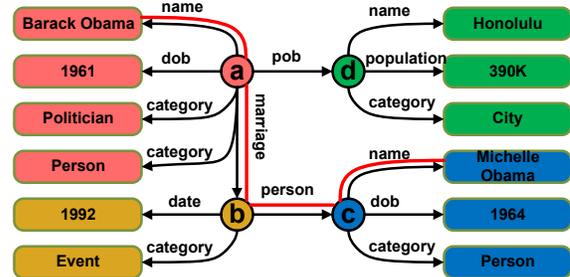}
\caption{A toy RDF knowledge base (here, ``dob'' and ``pob'' stand for ``date of birth'' and
  ``place of birth'' respectively).  Note that the ``spouse of'' intent
  is represented by multiple edges: name - marriage - person -
  name.}
\label{fig:barackobama} 
\vspace{-0.5cm}
\end{figure}


\subsection{Challenges}
\label{sec:intro:challenges}


Given a question against a knowledge base, we face two
challenges: in which representation we understand the questions (representation designment), and how to map the representations to structured queries against the knowledge base (semantic matching)?

\begin{itemize}[leftmargin=0.4cm]
\item {\bf Representation Designment:} Questions describe thousands of intents, and one intent has thousands of question templates. For example, both
  \textcircled{a} and \textcircled{b} in Table~\ref{tab:question} ask
  about {population} of {\it Honolulu}, although they are expressed in quite different ways.
  The QA system needs different representations for different questions. Such representations must be able to (1) identify questions with the same semantics; (2) distinguish different question intents. In the QA corpora we use, we find 27M question templates over 2782 question intents. So it's a big challenge to design representations to handle this.

\item {\bf Semantic Matching:} After figuring out the representation of
  a question, we need to map the representation to a structured query. For BFQ, the structured query mainly depends on the predicate in the knowledge base. Due to the gap between predicates and question representations, it is non-trivial to find such mapping. For example, in
  Table~\ref{tab:question}, we need to know $\textcircled{a}$ has the same semantics with predicate $population$. Moreover, in RDF graph, many binary relations do not correspond to a single edge but a complex structure: in
  Figure~\ref{fig:barackobama}, ``spouse of'' is expressed by a path $marriage \rightarrow$ $person \rightarrow name$. For the knowledge
  base we use, over 98\% intents we found correspond to complex structures.
\end{itemize}
\vspace{-0.3cm}

\nop{
\begin{itemize}
\item {Understanding Questions:} Currently, most QA systems
  answer a limited set of ``canned'' questions only. But the same
  question can have many different forms.  For example, both
  \textcircled{a} and \textcircled{b} in Table~\ref{tab:question} ask
  about {population} of {\it Honolulu}, but these two questions
  are expressed very differently. In our work, we actually discover
  \textbf{24,504} different expressions for inquiring about a person's
  profession. It is a big challenge to determine that all of them have
  the same semantics.

\item {Answering Questions:} After we figure out the semantics of
  the question, we need to access the knowledge base to get the
  answer. For BFQ, we need to map the question to a predicate in
  the knowledge base. Due to the
  difference in representation between predicates and questions, it is non-trivial to find such mapping . In
  Table~\ref{tab:question}, we need to map ``birthday'' questions
  (e.g. $\textcircled{d}$) to predicate $dob$, which means we
  need to know that dob and date of
    birth have the same semantics. Moreover, many binary relations do not correspond to a
  single edge but a complex structure in the RDF graph: In
  Figure~\ref{fig:barackobama}, ``spouse of'' is expressed by a path:
  $marriage \rightarrow person \rightarrow name$. For the knowledge
  base we use in this paper, over 98\% predicates correspond to
  complex structures.
\end{itemize}
}

\begin{table}[!htb]
\setlength{\tabcolsep}{2.5pt}
\scriptsize
\begin{center}
\caption{\bf \small Questions in Natural Language and Related Predicates in a Knowledge Base}
\begin{tabular}{  l | p{2.78cm} }
\hline
Question in Natural language & Predicate in KB \\ \hline
  \hline
  ${\textcircled{a}}$ How many people are there in Honolulu? & population \\ \hline
  ${\textcircled{b}}$ What is the population of Honolulu? & population \\ \hline
  ${\textcircled{c}}$ What is the total number of people in Honolulu? & population    \\ \hline
  ${\textcircled{d}}$ When was Barack Obama born? & dob   \\ \hline
  ${\textcircled{e}}$ Who is the wife of Barack Obama? & marriage$\rightarrow$person$\rightarrow$name \\ \hline
  ${\textcircled{f}}$ When was Barack Obama's wife born? & marriage$\rightarrow$person$\rightarrow$name \\
  & dob \\ \hline
\end{tabular}
\label{tab:question}
\end{center}
\vspace{-0.6cm}
\end{table}

Thus, the key problem is to build a mapping between natural language
questions and knowledge base predicates through proper question representations.

\subsection{Previous Works}
According to how previous knowledge
based QA systems represent questions, we roughly classify them into three categories: rule based,
keyword based, and synonym based.

\begin{enumerate}[leftmargin=0.4cm]
\setlength\itemsep{0em}
\item {\bf Rule based}~\cite{ou2008automatic}. Rule based approaches
  map questions to predicates by using manually constructed rules. This
  leads to high precision but low recall (low coverage of the variety
  of questions), since manually creating rules for a
  large number of questions is infeasible.
\item {\bf Keyword based}~\cite{unger2011pythia}. Keyword based methods
  use keywords in the question and map them to predicates
  by keyword matching. They may answer simple questions such as
  $\textcircled{b}$ in Table~\ref{tab:question} by identifying
  {population} in the question and mapping it to predicate
  {population} in the knowledge base.  But in general, using keywords
  can hardly find such mappings, since one predicate representation in the knowledge
  base cannot match diverse representations in natural language. For example, we cannot find
  {\it population} from $\textcircled{a}$ or $\textcircled{c}$.
\item {\bf Synonym based}~\cite{unger2012template,yahya2012natural,zou2014natural,zheng2015build}.
  Synonym based methods extend keyword based methods by taking
  synonyms of the predicates into consideration. They first generate
  synonyms for each predicate, and then find mappings between
  questions and these synonyms. DEANNA~\cite{yahya2012natural} is a typical synonym based QA system. The main idea is reducing QA into the evaluation of semantic similarity between predicate and candidate synonyms (words/phrases in the question). It uses Wikipedia to compute the semantic similarity. For example, question $\textcircled{c}$ in Table~{1} can be answered by knowing that {\tt number of people} in the question is a synonym of predicate $population$. Obviously, their semantic similarity can be evaluated by Wikipedia. gAnswer~\cite{zou2014natural,zheng2015build} further improved the precision by learning synonyms for more complex sub-structures. However, all these approaches cannot answer ${\textcircled{a}}$ in Table~{1}, as none of {\tt how many}, {\tt people}, {\tt are there} has obvious relation with $population$. {\tt How many people} is ambiguous in different context. In {\tt how many people live in Honolulu?}, it refers to $population$. In {\tt how many people visit New York each year?}, it refers to {\it number of passengers}.
\vspace{-0.2cm}
\end{enumerate}

In general, these works cannot solve the above challenges. For rule based approaches, it takes unaffordable human labeling effort. For keyword based or synonym based approaches, one word or one phrase cannot represent the question's semantic intent completely. We need to understand the question as a whole.
And it's even tremendously more difficult for previous approaches if the question is a complex question or maps to a complex structure in a knowledge base (e.g. $\textcircled{e}$ or $\textcircled{f}$).


\nop{
The direct consequence of these two challenges are:
\begin{itemize}
\item {\it Low coverage}. Because we need to understand a big amount of different questions for a single predicate. The more varieties of questions that an approach can understand, the more questions it can answer, and the more recall it will have.
\item The contradiction between human readability and human unreadability bring the problem of precision. Because how much we can solve this unreadability directly controls the precision of our approach. The better an approach understand the unreadability, the higher probability that it can map a question to correct fact, and the higher precision it will have.
\end{itemize}
}

\subsection{Overview of Our Approach}

\begin{figure}[h]
\centering
\includegraphics[scale=0.45]{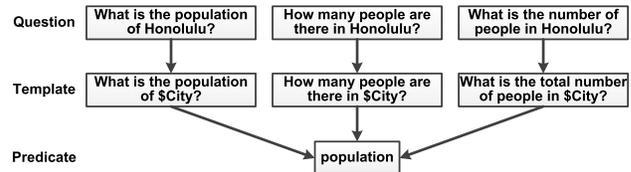}
\caption{\bf Our Approach}
\label{fig:template} 
\vspace{-0.4cm}
\end{figure}

To answer a question, we must first represent the
question. By representing a question, we mean transforming the
question from natural language to an internal representation
that captures the semantics and intent of the question. Then, for each
internal representation, we learn how to map it to an RDF query
against a knowledge base. Thus, the core of our work is the internal representation which we denote as {\it
  templates}.

{\bf Representing questions by templates}
The failure of synonym based approach in ${\textcircled{a}}$ inspires us to understand a question by {\bf templates}.
As an example, {\tt How many people are there in $\$city$?} is the template for ${\textcircled{a}}$. No matter $\$city$ refers to
Honolulu or other cities, the template always asks about population of the question.

Then, the task of representing a question is to map the question to
an existing template. To do this, we replace the entity in the question
by its concepts.  For instance, Honolulu will be replaced by
$\$city$ as shown in Figure~\ref{fig:template}. This process is not
trivial, and it is achieved through a mechanism known as
conceptualization~\cite{song2011short,kim2013context}, which automatically performs
disambiguation on the input (so that the term apple in {\tt what is the
headquarter of apple} will be conceptualized to $\$company$ instead of
$\$fruit$). The conceptualization mechanism itself is based on a large
semantic network (Probase~\cite{wu2012probase}) that consists of
millions of concepts, so that we have enough granularity to
represent all kinds of questions.

The template idea also works for complex questions. Using templates, we simply
decompose the complex question into a series of question, each of which
corresponds to one predicate. Consider question $\textcircled{f}$ in
Table~\ref{tab:question}. We decompose $\textcircled{f}$ into {\tt Barack Obama's wife} and {\tt when was Michelle Obama
born?}, which correspond to $marriage
\rightarrow person \rightarrow name$ and $dob$ respectively. Since the first
question is nested within the second one, we know $dob$ modifies
$marriage\rightarrow person\rightarrow name$, and $marriage\rightarrow
person\rightarrow name$ modifies {Barack Obama}.

{\bf Mapping templates to predicates}
We learn templates and their mappings to knowledge base predicates from
Yahoo! Answers. This problem is quite similar to the semantic parsing~\cite{2013-acl-textual-schema-matching,cai2013semantic}.
Most semantic parsing approaches are synonym based. To model the correlation between phrases and predicates, SEMPRE~\cite{berant2013semantic} uses a bipartite graph, and SPF~\cite{kwiatkowski2013scaling} uses a probabilistic combinatory categorial grammar (CCG)~\cite{clark2007wide}. They still have the drawbacks of synonym based approaches.
{The mapping from templates to predicates is $n:1$,
that is, each predicate in the knowledge base corresponds to multiple
templates.}  For our work, we learned a total of $27,126,355$ different
templates for 2782 predicates. The large amount guarantees the wide coverage of template-based
QA.

The procedure of learning the predicate of a template is as follows.
First, for each QA pair in Yahoo! Answer, we extract the entity in
question and the corresponding value. Then, we find the
predicate from the knowledge base by looking up the {\it direct}
predicate connecting the entity and the value. Our basic idea is, if most instances
of a template share the same predicate, we map the template
to this predicate. For example, suppose questions derived by template
{\tt how many people are there in $\$city$?} always map to the predicate
$population$, no matter what specific $\$city$ it is. We can conclude
that for certain probability the template maps to $population$.
Learning templates that map to a complex knowledge base structure
employs a similar process. The only difference is that we find
``expanded predicates'' that correspond to a {\it path} consisting of
multiple edges which lead from an entity to a certain value (e.g.,
$marriage \rightarrow person \rightarrow name$).

\subsection{Paper Organization}
The rest of the paper is organized as follows. In
Sec~\ref{sec:overview}, we give an overview of KBQA.
The major contribution of this paper is {\bf learning templates} from QA corpora. All technique parts are closely related to it. Sec 3 shows the online question answering with templates. Sec 4 elaborates the predicates inference for templates, which is the key step to use templates. Sec 5 extends our solution to answer a complex question. Sec 6 extends the ability of templates to infer complex predicates.
We present experimental
studies in Sec~\ref{sec:exp}, discuss more related works in
Sec~\ref{sec:related}, and conclude in
Sec~\ref{sec:conslusion}.

\nop{
Sec~\ref{sec:kbqa} illustrates how KBQA works online. In Sec~\ref{sec:pi}, we elaborate the core approach of KBQA, i.e., the inference from template to predicate. In Sec~\ref{sec:complex}, we decompose complex
questions into a series of BFQs to enable the question of those questions.
Sec~\ref{sec:expansion} describes how we handle complex
predicates. We conduct a comprehensive experimental
study in Sec~\ref{sec:exp}, discuss more related works in
Sec~\ref{sec:related}, and conclude in
Sec~\ref{sec:conslusion}.
}

\vspace{-0.2cm}
\section{System Overview}
\label{sec:overview}

In this section, we introduce some background knowledge and give an
overview of KBQA.  In Table~\ref{tab:notation}, we list the
notations used in this paper.

\begin{table}[!htb]
\small
\vspace{-0.2cm}
\begin{center}
\caption{\bf Notations}
\begin{tabular}{  p{0.9cm} | l | p{0.9cm} | l  }
  \hline
  Notation & Description & Notation & Description \\ \hline
  \hline
  $q$ & question & $s$ & subject\\ \hline
  $a$ & answer & $p$ & predicate \\ \hline
  $\mathcal{QA}$ & QA corpus & $o$ & object \\ \hline
  $e$ & entity & $\mathcal{K}$ & knowledge base \\ \hline
  $v$ & value & $c$ & category \\ \hline
  $t$ & template & $p^+$ & expanded predicate \\ \hline
  $V(e,p)$ & $\{v|(e,p,v) \in \mathcal{K} \}$ & $s_2 \subset s_1$ & $s_2$ is a substring of $s_1$ \\ \hline
  $t(q,e,c)$ & template of $q$ by & $\theta^{(s)}$ & estimation of $\theta$  \\
  & conceptualizing $e$ to $c$& & at iteration $s$  \\ \hline
\end{tabular}
\label{tab:notation}
\end{center}
\vspace{-0.5cm}
\end{table}

{\bf Binary factoid QA}
We focus on binary factoid questions (BFQs), that is, questions asking about
a specific property of an entity. For example, all questions except ${\textcircled{f}}$ in Table~\ref{tab:question} are BFQs.

{\bf RDF knowledge base}
Given a question, we find its answer in an RDF knowledge base. An RDF
knowledge base $\mathcal{K}$ is a set of triples in the form of $(s,
p, o)$, where $s$, $p$, and $o$ denote {subject},
{predicate}, and {object} respectively. Figure~\ref{fig:barackobama} shows a toy
RDF knowledge base via an edge-labeled directed graph. Each $(s, p, o)$ is represented by a directed \reviseyq{edge} from $s$ to $o$
labeled with predicate $p$. For example, the edge from $a$ to $1961$ with label $dob$
represents an RDF triple $(a,dob,1961)$, which represents the knowledge of Barack Obama's
birthday.

\begin{center}
\small
\captionof{table}{\bf \small Sample QA Pairs from a QA Corpus}
\begin{tabular}{ c | p{2.9cm} | p{2.9cm} }
\hline
Id & Question & Answer \\ \hline
\hline
$(q_1,a_1)$ & When was Barack Obama born? & The politician was born in 1961.\\ \hline
$(q_2,a_2)$ & When was Barack Obama born? & \reviseyq{He was born in 1961.} \\ \hline
$(q_3,a_3)$ & How many people are there in Honolulu?& It's 390K.\\ \hline
\end{tabular}
\label{tab:qa}
\end{center}

{\bf QA corpora}
We learn question templates from Yahoo! Answer, which consists of 41 million QA pairs.
The QA corpora is denoted by $\mathcal{QA}=\{(q_1,a_1),(q_2,a_2),...,(q_n,a_n)\}$, where $q_i$ is a
question and $a_i$ is the reply to $q_i$. Each reply $a_i$ consists of
several sentences, and the exact factoid answer is
contained in the reply. Table~\ref{tab:qa} shows a sample from a QA corpus.


\paragraph*{{\bf Templates}}
We derive a template $t$ from a question $q$ by replacing each entity
$e$ with one of $e$'s categories $c$. We denote this template as
$t=t(q, e,c)$. 
A question may contain multiple entities, and an entity may
belong to multiple categories. We obtain concept distribution of $e$ through context-aware
conceptualization~\cite{wu2012probase}. For example, question $q_1$ in Table~\ref{tab:qa} contains entity ${\bf a}$ in
Figure~\ref{fig:barackobama}. Since ${\bf a}$ belongs to two
categories: \$Person, \$Politician, we can derive two templates from the question: {\tt When was \$Person born?} and {\tt When was \$Politician
born?}.
\vspace{-0.3cm}

\begin{figure}[h]
\centering \includegraphics[scale=0.4]{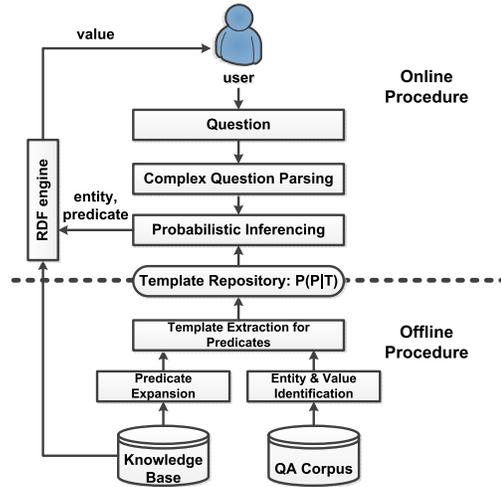}
\caption{\bf System Overview}
\label{fig:framework} 
\vspace{-0.5cm}
\end{figure}

\paragraph*{{\bf System Architecture}}
Figure~\ref{fig:framework} shows the pipeline of our QA system, which consists of two major procedures:
\begin{itemize}[leftmargin=0.4cm]
\setlength\itemsep{0em}
\item \textbf{Online procedure:} When a question comes in, we first
  parse and decompose it into a series of binary factoid
  questions. The decomposition process is described in
  Sec~\ref{sec:complex}. For each binary factoid question, we use
  a probabilistic inference approach to find its value, shown in Sec~\ref{sec:kbqa}. The
  inference is based on the predicate
  distribution of given templates, i.e. $P(p|t)$. Such distribution is learned offline.

\item \textbf{Offline procedure:} The goal of offline procedure is to
  learn the mapping from templates to predicates. This is represented by $P(p|t)$, which is estimated in Sec~\ref{sec:pi}.
 And we expand predicates in the knowledge base in Sec~\ref{sec:expansion}, so that we can learn more complex predicate forms (e.g., $marriage
  \rightarrow person \rightarrow name$ in
  Figure~\ref{fig:barackobama}).
%
\vspace{-0.3cm}
\end{itemize}

\section{Our Approach: KBQA}
\label{sec:kbqa}
In this section, we first formalize our problem in a probabilistic framework in Sec~\ref{sec:probabilisticmodel}.
We present the details for most probability estimations in Sec~\ref{sec:otherdistributions},  leaving only the estimation of $P(p|t)$ in Sec~\ref{sec:pi}.
We elaborate the online procedure in Sec~\ref{sec:online}.


\subsection{Problem Model}
\label{sec:probabilisticmodel}
KBQA learns question answering by using a QA corpus and a
knowledge base. Due to issues such as {\it uncertainty} (e.g. some questions' intents are vague), {\it incompleteness} (e.g. the knowledge base is almost
always incomplete), and {\it noise} (e.g.  answers in the QA corpus
may be wrong), we create a probabilistic model for QA over a knowledge base below. We highlight the uncertainty from the question's intent to the knowledge base's predicates~\cite{kwiatkowski2013scaling}. For example, the question ``where was Barack Obama from'' is related to at least two predicates in Freebase: ``place of birth'' and ``place lived location''. In DBpedia, {\tt who founded \$organization?} relates to predicates $founder$ and $father$.


\begin{pdef}
\label{pdef:pvq}
Given a question $q$, our goal is
to find an answer $v$ with maximal
probability ($v$ is a simple value):
\begin{equation}
\small
\argmax_{v}P(V=v|Q=q)
\end{equation}
\vspace{-0.7cm}
\end{pdef}



To illustrate how a value is found for a given question, we proposed a generative model. Starting from the user question $q$, we first generate/identify its entity $e$ according to the distribution $P(e|q)$. After knowing the question and the entity, we generate the template $t$ according to the distribution $P(t|q,e)$. The predicate $p$ only depends on $t$, which enables us to infer the predicate $p$ by $P(p|t)$. Finally, given the entity $e$ and the predicate $p$, we generate the answer value $v$ by $P(v|e,p)$. $v$ can be directly returned or embedded in a natural language sentence as the answer $a$. We illustrate the generation procedure in Example~\ref{exam:generative}, and shows the dependency of these random variables in Figure~\ref{fig:onlineplate}.
Based on the generative model, we compute $P(q,e,t,p,v)$ in Eq~(\ref{eqn:pqetpv}). Now Problem~\ref{pdef:pvq} is reduced to Eq~(\ref{eqn:vreduce}).
\begin{equation}
\label{eqn:pqetpv}
P(q,e,t,p,v)=P(q)P(e|q)P(t|e,q)P(p|t)p(v|e,p)
\end{equation}
\begin{equation}
\label{eqn:vreduce}
\argmax_{v} \sum_{e,t,p} P(v|q,e,t,p)
\end{equation}
\begin{example}
\begin{figure}[h]
\vspace{-0.5cm}
\centering
\includegraphics[scale=0.4]{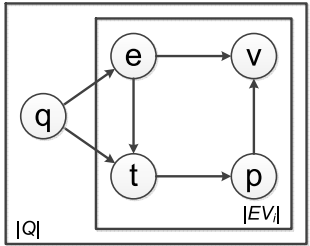}
\caption{\bf Probabilistic Graph}
\label{fig:onlineplate} 
\vspace{-0.3cm}
\end{figure}
\label{exam:generative}
Consider the generative process of $(q_3,a_3)$ in Table~\ref{tab:qa}. Since the only entity in $q_3$ is ``Honolulu'', we generate the entity node $d$ (in Figure~\ref{fig:barackobama}) by $P(e=d|q=q_3)=1$. By conceptualizing ``Honolulu'' to a city, we generate the template {\tt How many people are there in \$city?}. Note that the corresponding predicate of the template is always ``population'', no matter which specific city it is. So we generate predicate ``population'' by distribution $P(p|t)$. After generating entity ``Honolulu'' and predicate ``population'', the value ``390k'' can be easily found from the knowledge base in Figure~\ref{fig:barackobama}. Finally we use a natural language sentence $a_3$ as the answer.  
\end{example}

{\bf Outline of the following subsections}
Given the above objective function, our problem is reduced to the estimation of each probability term in Eq~\eqref{eqn:pqetpv}.
The term $P(p|t)$ is estimated in the offline procedure in Sec~\ref{sec:pi}.
All other probability terms can be directly computed by the off-the-shelf solutions (such as NER, conceptualization). We elaborate the calculation of these probabilities in Sec~\ref{sec:otherdistributions}. And we elaborate the online procedure in Sec~\ref{sec:online}.


\subsection{Probability Computation}
\label{sec:otherdistributions}
In this subsection, we compute each probability term in Eq~\eqref{eqn:pqetpv} except $P(p|t)$.

{\bf Entity distribution $P(e|q)$} The distribution represents the entity identification from the question. We identify entities that meet both conditions: (a) it is an entity in the question; (b) it is in the knowledge base. We use Stanford Named Entity Recognizer~\cite{finkel2005incorporating} for (a). And we then check if it is an entity's name in the knowledge base for (b). If there
are multiple candidate entities, we simply give them
uniform probability.

We optimize the computation of $P(e|q)$ in the offline procedure by $q$'s answer. As illustrated in Sec~\ref{sec:observationformalization}, we already extracted a set of entity-value pairs $EV_i$ for question $q_i$ and answer $a_i$, where the values are from the answer. We assume the entities in $EV_i$ have equal probability to be generated. So we obtain:
\begin{equation}
\label{eqn:peq}
P(e|q_i)=\frac{ [\exists v, (e,v) \in EV_i]  }{|\{e'|\exists v, (e',v) \in EV_i \}|}
\end{equation}
,where $[.]$ is the Iverson bracket. As shown in Sec~\ref{sec:exp:components}, this approach is more accurate than directly using the NER approach.

%

{\bf Template distribution $P(t|q,e)$}
A template is in the form of {\tt When was \$person
born?}. In other words, it is a question with the mention of an
entity (e.g., ``Barack Obama'') replaced by the category of the entity
(e.g., \$person).

Let $t = t(q, e, c)$ indicate that template $t$ is obtained by
replacing entity $e$ in $q$ by $e$'s category $c$. Thus, we have
\begin{equation}
\small
\label{eqn:ptqe}
P(t|q, e) = P(c|q, e)
\end{equation}
, where $P(c|q, e)$ is the category distribution of $e$ in context $q$. In our work, we directly apply the
conceptualization method in~\cite{song2011short} to compute $P(c|q,e)$.




{\bf Value (answer) distribution $P(v|e,p)$}
For an entity $e$ and a predicate $p$ of $e$, it is
easy to find the predicate value $v$ by looking up the knowledge base. For
example, in Figure~\ref{fig:barackobama}, let entity $e$ = Barack Obama, and predicate $p$ = dob. We easily get Obama's birthday, 1961, from the knowledge base. In this
case, we have $P(1961|\mbox{Barack Obama}, dob) = 1$, since Barack Obama
only has one birthday. Some predicates may
have multiple values (e.g., the children of Barack Obama). In this
case, we assume uniform probability for all possible values. More formalized, we compute $P(v|e,p)$ by
\begin{equation}
\label{eqn:pvep}
P(v|e,p)=\frac{[(e,p,v) \in \mathcal{K} ]}{|\{(e,p,v')|(e,p,v') \in \mathcal{K}\}|}
\end{equation}

\subsection{Online Procedure}
\label{sec:online}
In the online procedure, we are given a user question $q_0$. We can compute $p(v|q_0)$ by Eq~\eqref{eqn:pvq}. And we return $\argmax_{v}P(v|q_0)$ as the answer.
\begin{flalign}
\begin{split}
\label{eqn:pvq}
P(v|q_0) =  \sum_{e,p,t} P(q_0)P(v|e,p)P(p|t)P(t|e,q_0)P(e|q_0)
\end{split}
\end{flalign}
, where $P(p|t)$ is derived from offline learning in Sec~\ref{sec:pi}, and other probability terms are computed in Sec~\ref{sec:otherdistributions}.


{\bf Complexity of Online Procedure:} In the online procedure, we enumerate $q_0$'s entities, templates, predicates, and values in order. We treat the number of entities per question, the number of concepts per entity, and the number of values per entity-predicate pair as constants. So the complexity of the online procedure is $O(|P|)$, which is caused by the enumeration on predicate. Here $|P|$ is the number of distinct predicates in the knowledge base.

\section{Predicate Inference}
\label{sec:pi}
In this section, we present how we infer predicates from templates, i.e., the estimation of $P(p|t)$.
We treat the distribution $P(P|T)$ as parameters and then use the maximum likelihood (ML) estimator to estimate $P(P|T)$. To do this, we first formulate the likelihood of the observed data (i.e., QA pairs in the corpora) in Sec~\ref{sec:observationformalization}. Then we present the parameter estimation and its algorithmic implementation in Sec~\ref{sec:templatelearning} and Sec~\ref{sec:implementation}, respectively.

\subsection{Likelihood Formulation}
\label{sec:observationformalization}
Rather than directly formulating the likelihood to observe the QA corpora ($\mathcal{QA}$ ), we first formulate a simpler case, the likelihood of a set of question-entity-value triples extracted from the QA pairs. Then we build the relationship between the two likelihoods. The indirect formulation is well motivated. An answer in $\mathcal{QA}$ is usually a complicated natural language sentence containing the exact value and many other tokens.
Most of these tokens are meaningless in indicating the predicate and bring noise to the observation.
On the other hand directly modeling the complete answer in a generative model is difficult, while modeling the value in a generative model is much easier.

Next, we first extract entity-value pairs from the given QA pair in Sec~\ref{sec:entityvalueextraction}, which allows to formalize the
likelihood of question-entity-value triples ($X$). We then establish the relationship between the likelihood of the QA corpora and the likelihood of $X$, in Eq~\eqref{eqn:likelihoodequal}, Sec~\ref{sec:likelihoodfunction}.





\subsubsection{Entity-Value Extraction}
\label{sec:entityvalueextraction}
Our principle to extract candidate values from the answer is that {\it a valid entity\&value pair usually has some corresponding relationships in knowledge base}.
Following this principle, we identify the candidate entity\&value pairs from $(q_i,a_i)$:
\begin{equation}
\label{eqn:vqa}
EV_i=\{(e,v)|e \subset q_i, v \subset a_i, \exists p, (e,p,v) \in \mathcal{K}\}
\end{equation}, where $\subset$  means ``is substring of''. We illustrate this in Example~\ref{example:vqa}.
\begin{example}
\label{example:vqa}
Consider $(q_1,a_1)$ in Table~\ref{tab:qa}. Many tokens (e.g. {\tt The, was, in}) in the answer are useless. We extract the valid value {\tt 1961}, by noticing that the entity {\tt Barack Obama} in $q_1$ and {\tt 1961} are connected by predicate ``pob'' in Figure~\ref{fig:barackobama}. Note that we also extract the noise value {\tt politician} in this step. We will show how to filter it in the refinement step below.
\end{example}
{\bf Refinement of $EV_i$} We need to filter the noisy pairs in $EV(q,a)$, e.g. $(Barack\ Obama, politician)$ in Example~\ref{example:vqa}. The intuition is: the correct value and the question should {\it have the same category.} Here the category of a question means the expected answer type of the question. This has been studied as the question classification problem~\cite{metzler2005analysis}. We use the UIUC taxonomy~\cite{li2002learning}. For question categorization, we use the approach proposed in~\cite{metzler2005analysis}. For value's categorization, we refer to the category of its predicate. The predicates' categories are manually labeled. This is feasible since there are only a few thousand predicates.

\subsubsection{Likelihood Function}
\label{sec:likelihoodfunction}
After the entity\&value extraction, each QA pair $(q_i, a_i)$ is transferred into a question and a set of entity-value pairs, i.e., $EV_i$. By assuming the independence of these entity-value pairs, the probability of observing such a QA pair is shown in Eq~\eqref{eqn:pqiai}. Thus, we compute the likelihood of the entire QA corpora in Eq~\eqref{eqn:lqa}.
\begin{equation}
\label{eqn:pqiai}
\small
P(q_i,a_i)=P(q_i, EV_i)= P(q_i) \prod_{(e,v) \in EV_i} P(e,v|q_i)
\end{equation}
\begin{equation}
\label{eqn:lqa}
L_{\mathcal{QA}}=\prod_{i=1}^n [ P(q_i) \prod_{(e,v) \in EV_i}P(e,v|q_i) ]
\end{equation}
By assuming each question has an equal probability to be generated, i.e. $P(q_i)=\alpha$, we have:
\begin{flalign}
\small
\label{eqn:likelihood}
\begin{split}
& L_{\mathcal{QA}}=\prod_{i=1}^n [ P(q_i)^{1-|EV_i|} \prod_{(e,v) \in EV_i} P(e,v|q_i) P(q_i) ] \\
& =\beta \prod_{i=1}^n [  \prod_{(e,v) \in EV_i} P(e,v,q_i) ]
\end{split}
\end{flalign}
, where $\beta=\alpha^{n - \sum_{i=1}^n |EV_i| }$ can be considered as a constant. Eq~\eqref{eqn:likelihood} implies that $L_{\mathcal{QA}}$ is proportional to the likelihood of these
question-entity-value triples. Let $X$ be the set of such triples that are extracted from QA corpora:
\begin{equation}
\small
X= \{(q_i,e,v)|(q_i,a_i)\in \mathcal{QA}, (e, v)\in EV_i\}
\end{equation}
We denote the $i$-the term in $X$ as $x_i=(q_i,e_i,v_i)$. So $X=\{x_1,...,x_m\}$. Thus we establish the linear relationship between the likelihood of $\mathcal{QA}$ and the likelihood of $X$.
\begin{flalign}
\small
\label{eqn:likelihoodequal}
\begin{split}
L_{\mathcal{QA}}= \beta L_{X}=\beta \prod_{i=1}^m P(x_i)=\beta \prod_{i=1}^m P(q_i,e_i,v_i)
\end{split}
\end{flalign}

Now, maximizing the likelihood of $\mathcal{QA}$ is equivalent to maximize the likelihood of $X$.
Using the generative model in Eq~\eqref{eqn:pqetpv}, we calculate $P(q_i,e_i,v_i)$ by marginalizing the joint probability  $P(q,e,t,p,v)$
over all templates $t$ and all predicates $p$.
The likelihood is shown in Eq~\eqref{eqn:likelihoodx}. We illustrate the entire process in Figure~\ref{fig:onlineplate}.
\begin{flalign}
\small
\begin{split}
L_{X}=\prod_{i=1}^m \sum_{p\in P,t \in T} P(q_i)P(e_i|q_i)P(t|e_i,q_i)P(p|t)p(v_i|e_i,p)
\end{split}
\label{eqn:likelihoodx}
\end{flalign}
\vspace{-0.3cm}




\subsection{Parameter Estimation}
\label{sec:templatelearning}

{\bf Goal:} In this subsection, we estimate $P(p|t)$ by maximizing Eq~\eqref{eqn:likelihoodx}. 
We denote the distribution $P(P|T)$ as parameter $\theta$ and its corresponding log-likelihood as $L(\theta)$. And we denote the probability $P(p|t)$ as $\theta_{pt}$. So we estimate $\theta$ by:
\begin{equation}
\small
\hat{\theta}=\argmax_{\theta} L(\theta)
\end{equation}
, where
\begin{flalign}
\small
\begin{split}
& L(\theta)=\sum_{i=1}^m \log P(x_i) = \sum_{i=1}^m \log P(q_i,e_i,v_i) \\
& =\sum_{i=1}^m \log [\sum_{p\in P,t \in T} P(q_i)P(e_i|q_i)P(t|e_i,q_i)\theta_{pt} P(v_i|e_i,p)]
\end{split}
\end{flalign}

{\bf Intuition of EM Estimation:} We notice that some random variables (e.g. predicate and template) are latent in the proposed probabilistic model, which motivates us to use the Expectation-Maximization (EM) algorithm to estimate the parameters. The EM algorithm is a classical approach to find the maximum likelihood estimates of parameters in a statistical model with unobserved variables.
The ultimate objective is to maximize the \emph{likelihood of complete data} $L(\theta)$. However, it involves a logarithm of a sum and is computationally hard.
Hence, we instead resort to maximizing one of its lower bound~[7], i.e., the {\bf Q-function} $\mathcal{Q}(\theta;\theta^{(s)})$.
To define the Q-function, we leverage the \emph{likelihood of complete data} $L_c(\theta)$. The EM algorithm maximizes $L(\theta)$ by maximizing the lower bound $\mathcal{Q}(\theta;\theta^{(s)})$ iteratively. In the $s$-th iteration, {\bf E-step} computes $\mathcal{Q}(\theta;\theta^{(s)})$ for given parameters $\theta^{(s)}$, and {\bf M-step} estimates the parameters $\theta^{(s+1)}$ (parameters in the next iteration) that maximizes the lower bound.

{\bf Likelihood of Complete Data:} Directly maximizing $L(\theta)$ is computationally hard, since the function involves a logarithm of a sum. Intuitively, if we know the complete data of each observed triple, i.e., which template and predicate it is generated with, the estimation becomes much easier. We thus introduce a hidden variable $z_i$ for each observed triple $x_i$. The value of $z_i$ is a pair of predicate and template, i.e. $z_i=(p,t)$, indicating $x_i$ is generated with predicate $p$ and template $t$. Note that we consider predicate and template together since they are not independent in the generation. Hence, $P(z_i=(p,t))$ is the probability that $x_i$ is generated with predicate $p$ and template $t$.

We denote $Z=\{z_1,...,z_m\}$. $Z$ and $X$ together form the complete data. The log-likelihood of observing the complete data is:
\begin{equation}
\small
L_c(\theta)=\log P(X,Z|\theta) = \sum_{i=1}^m \log P(x_i,z_i|\theta)
\end{equation}
, where
\begin{flalign}
\small
\begin{split}
& P(x_i,z_i=(p,t)|\theta)=P(q_i,e_i,v_i,p,t|\theta) \\
&=P(q_i)P(e_i|q_i)P(t|e_i,q_i)\theta_{pt}P(v_i|e_i,p) =f(x_i,z_i)\theta_{pt}
\end{split}
\label{eqn:p2f}
\end{flalign}
\begin{equation}
\small
\label{eqn:f}
f(x=(q,e,v),z=(p,t))=P(q)P(e|q)P(t|e,q)P(v|e,p)
\end{equation}
As discussed in Sec~\ref{sec:otherdistributions}, $f()$ can be computed independently before the estimation of $P(p|t)$. So we treat it as a known factor.

{\bf Q-function:}
Instead of optimizing $L(\theta)$ directly, we define the ``Q-function'' in Eq~\eqref{eqn:qfunc}, which is the expectation of the complete observation's likelihood. Here $\theta^{(s)}$ is the estimation of $\theta$ at iteration $s$. According to Theorem~\ref{proposition:1}, when treating $h(\theta^{(s)})$ as a constant, $\mathcal{Q}(\theta;\theta^{(s)})$ provides a lower bound for $L(\theta)$. Thus we try to improve $\mathcal{Q}(\theta;\theta^{(s)})$ rather than directly improve $L(\theta)$.
\begin{flalign}
\small
\label{eqn:qfunc}
\begin{split}
&\mathcal{Q}(\theta;\theta^{(s)})= E_{P(Z|X,\theta^{(s)})}[ L_c(\theta) ] \\
& = \sum_{i=1}^m \sum_{p \in P, t \in T} P(z_i=(p,t)|X,\theta^{(s)}) \log P(x_i, z_i=(p,t)|\theta)
\end{split}
\end{flalign}
\begin{theorem}[ Lower bound~\cite{dempster1977maximum}]
\label{proposition:1}
$L(\theta) \ge \mathcal{Q}(\theta;\theta^{(s)}) + h(\theta^{(s)})$
, where $h(\theta^{(s)})$ only relies on $\theta^{(s)}$ and can be treated as constant for $L(\theta)$.\end{theorem}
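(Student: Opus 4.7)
The plan is to derive the bound by the standard EM trick of rewriting $L(\theta)$ using the posterior $P(Z \mid X, \theta^{(s)})$ as an auxiliary distribution and then invoking Jensen's inequality. Concretely, I would start from $L(\theta) = \log P(X \mid \theta) = \log \sum_Z P(X, Z \mid \theta)$, and multiply and divide inside the sum by $P(Z \mid X, \theta^{(s)})$, turning the inner expression into an expectation of $P(X, Z \mid \theta) / P(Z \mid X, \theta^{(s)})$ under the posterior at iteration $s$.

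Next, since $\log$ is concave, Jensen's inequality pushes the logarithm inside the expectation, giving
\begin{equation}
L(\theta) \;\ge\; \sum_Z P(Z \mid X, \theta^{(s)}) \log \frac{P(X, Z \mid \theta)}{P(Z \mid X, \theta^{(s)})}.
\end{equation}
Splitting the logarithm of the quotient yields two terms: the first is exactly $\mathcal{Q}(\theta; \theta^{(s)})$ as defined in Eq.~\eqref{eqn:qfunc} (an expectation of $\log P(X, Z \mid \theta)$ under the posterior), while the second is $-\sum_Z P(Z \mid X, \theta^{(s)}) \log P(Z \mid X, \theta^{(s)})$, namely the entropy of the posterior computed at $\theta^{(s)}$. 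I would set $h(\theta^{(s)})$ equal to this entropy term and note that it depends only on $\theta^{(s)}$, so it is a constant as a function of $\theta$, which is precisely what the statement requires.

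There is no real obstacle here; the only thing to be careful about is the factorization and independence structure implicit in writing $P(X, Z \mid \theta) = \prod_i P(x_i, z_i \mid \theta)$ and the corresponding posterior $P(Z \mid X, \theta^{(s)}) = \prod_i P(z_i \mid x_i, \theta^{(s)})$, which is justified by the per-triple independence assumption already used in Eq.~\eqref{eqn:likelihoodx}. With that factorization the application of Jensen's inequality is per-$i$ and the entropy term $h(\theta^{(s)})$ decomposes into a sum of per-triple entropies, but this internal structure is irrelevant for the statement and can be suppressed. Since the bound is tight at $\theta = \theta^{(s)}$ (when the ratio inside Jensen is constant in $Z$), this also justifies using $\mathcal{Q}$ as a valid surrogate to iteratively improve $L(\theta)$ in the M-step, which is the point of invoking the theorem.
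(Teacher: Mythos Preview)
Your derivation is correct and is exactly the classical Jensen-inequality argument for the EM lower bound. Note, however, that the paper does not supply its own proof of this statement: Theorem~\ref{proposition:1} is simply stated with a citation to Dempster et al.\ (1977) and then used, so there is nothing in the paper to compare your argument against beyond the fact that the cited reference establishes the bound by essentially the same route you outline.
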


In {\bf E-step}, we compute $\mathcal{Q}(\theta;\theta^{(s)})$. For each $P(z_i|X,\theta^{s})$ in Eq~\eqref{eqn:qfunc}, we have:
\begin{flalign}
\small
\label{eqn:estep}
\begin{split}
P(z_i=(p,t)|X,\theta^{(s)}) = f(x_i,z_i) \theta_{pt}^{(s)}
\end{split}
\end{flalign}

In {\bf M-step}, we maximize the Q-function. By using Lagrange multiplier, we obtain $\theta_{pt}^{(s+1)}$ in Eq~\eqref{eqn:mstep}. 
\begin{equation}
\small
\label{eqn:mstep}
\theta_{pt}^{(s+1)}= \frac{ \sum_{i=1}^m  P(z_i=(p,t)|X,\theta^{(s)})}{\sum_{p' \in P} \sum_{i=1}^m  P(z_i=(p',t)|X,\theta^{(s)})}
\end{equation}

\nop{
Since $P(P|t)$ is a probabilistic distribution, we have an obvious constraint: for each $t$, $\sum_{p\in P} \theta_{pt}=1$. Using the method of Lagrange multiplier, we introduce a new function $g(\theta)$ in Eq~\eqref{eqn:g}, where each $\mu_t$ is a Lagrange multiplier:
\begin{equation}
\small
\label{eqn:g}
g(\theta)=\mathcal{Q}(\theta;\theta^{(s)})+\sum_t \mu_t (1-\sum_p \theta_{pt})
\end{equation}
According to the method of Lagrange multipliers, the $\theta$ of $g(\theta)$'s stationary point is a maximum parameter for $\mathcal{Q}(\theta;\theta^{(s)})$. So we take its derivative w.r.t. each parameter variable $\theta_{pt}$:
\begin{flalign}
\small
\begin{split}
& \frac{\partial g(\theta)}{\partial \theta_{pt}} = \sum_{i=1}^m \frac{P(z_i=(p,t)|X,\theta^{(s)})}{P(x_i,z_i=(p,t)|\theta)} \frac{ \partial P(x_i,z_i=(p,t)|\theta)  }{\partial \theta_{pt}} -\mu_t \\
& = \sum_{i=1}^m \frac{P(z_i=(p,t)|X,\theta^{(s)})}{ f(x_i,(p,t))\theta_{pt} } f(x_i,(p,t)) -\mu_t \\
& = \sum_{i=1}^m \frac{P(z_i=(p,t)|X,\theta^{(s)})}{ \theta_{pt} } -\mu_t
\end{split}
\end{flalign}

After setting this derivative to zero and solving the equation for $\theta_{pt}$, we obtain $\theta_{pt}^{(s+1)}$ as follows:
\begin{equation}
\label{eqn:mstep}
\theta_{pt}^{(s+1)}= \frac{ \sum_{i=1}^m  P(z_i=(p,t)|X,\theta^{(s)})}{\sum_{p' \in P} \sum_{i=1}^m  P(z_i=(p',t)|X,\theta^{(s)})}
\end{equation}
}

\subsection{Implementation}
\label{sec:implementation}

Now we elaborate the implementation of the EM algorithm (in Algorithm~\ref{algo:em}), which consists of three steps: initialization, E-step, and M-step.

{\bf Initialization}: 
To avoid $P(z_i=(p,t)|X,\theta^{(s)})$ in Eq~\eqref{eqn:estep} being all zero, we require that $\theta^{(0)}$ is uniformly distributed over all pairs of $(x_i, z_i)$ s.t. $f(x_i,z_i)>0$. So we have:
\begin{equation}
\theta_{pt}^{(0)}= \frac{[\exists i, f(x_i,z_i=(p,t)) > 0]}{ |\{p'| \exists i, f(x_i,z_i=(p',t))>0 \}| }
\end{equation}

{\bf E-step}: We enumerate all $z_i$ and compute $P(z_i|X,\theta^{(s)})$ by Eq~\eqref{eqn:estep}. Its complexity is $O(m)$.


{\bf M-step}:
We compute the $\sum_{i=1}^m P(z_i=(p,t)|X,\theta^{(s)})$ for each $\theta_{pt}^{(s+1)}$.
The direct computation costs $O(m|P||T|)$ time since we need to enumerate all possible templates and predicates. Next, we reduce it to $O(m)$ by only enumerating a constant number of templates and predicates for each $i$.

We notice that only $z_i$ with $P(z_i=(p,t)|X,\theta^{(s)})>0$ needs to be considered. Due to Eq~\eqref{eqn:p2f} and Eq~\eqref{eqn:f}, this implies:
\begin{flalign}
\begin{split}
f(x_i,z_i=(p,t)) > 0  \Rightarrow P(t|e_i,q_i)>0, P(v_i|e_i,p)>0
\end{split}
\end{flalign}
With $P(t|e_i,q_i)>0$, we pruned the enumeration of templates.
$P(t|e_i,q_i)>0$ implies that we only enumerate the templates which are derived by conceptualizing $e_i$ in $q_i$. The number of concepts for $e$ is obviously upper bounded and can be considered as a constant. Hence, the total number of templates $t$ enumerated in Line 7 is $O(m)$. With $P(v_i|e_i,p)>0$, we pruned the enumeration of predicates. $P(v_i|e_i,p)>0$ implies that only predicates connecting $e_i$ and $v_i$ in the knowledge base need to be enumerated. The number of such predicates also can be considered as a constant. So the complexity of the M-step is $O(m)$.

\nop{
{\bf M-step}: In this step, we compute $\sum_{i=1}^m  P(z_i=(p,t)|X,\theta^{(s)})$. During the computation, we only need to consider $z_i$ with $P(z_i=(p,t)|X,\theta^{(s)})>0$. Due to Eq~19 and Eq~21, this implies that:
\begin{flalign}
\begin{split}
f(x_i,z_i=(p,t)) > 0  \Rightarrow P(t|e_i,q_i)>0, P(v_i|e_i,p)>0
\end{split}
\end{flalign}
To ensure $P(t|e_i,q_i)>0$, we first enumerate all $i$. For each $i$, we use the conceptualization result of $q_i,e_i$ to enumerate templates. Thus each enumerated template $t$ satisfies that $P(t|q_i,e_i)>0$. For each $(q,e)$ pair in $X$, we treat the number of concepts for $e$ as a constant. Hence, the total number of $t$ for all $i$ enumerated in Line~7 is $O(m)$. Then we enumerate all $p$ satisfying $P(v_i|e_i,p)>0$.
Note that these predicates connect $e_i$ and $v_i$. We also treat the number of such predicates as a constant. So the complexity of the M-step is $O(m)$.
}


\nop{
\begin{algorithm}[h]
\small
 \caption{EM Algorithm for Template Learning}
\label{algo:em}
\begin{algorithmic}[1]
\Require{$X$;}
\Ensure{$P(p|t)$;}
    \State {Initialize the iteration counter $s \leftarrow 0$}
    \State {Initialize the parameter $\theta$}
    \While {$\theta$ not converged}
        \Statex {\ \ \ \ \ \ //\emph{ E-step} }
        \For {$i=1...m$}
            \State {Estimate $P(z_i|X,\theta^{(s)})$ by Eq~\eqref{eqn:estep}}
        \EndFor
        \Statex {\ \ \ \ \ \ //\emph{ M-step} }
        \For {$i=1...m$ }
            \For {all $t \in T$ for $q_i,e_i$ with $P(t|q_i,e_i)>0$}\label{line:em:1}
                \For {all $p \in P$}
                    \State {$ P(z_i=(p,t)|X,\theta^{(s)}) += f((q,e,v),(p,t))\theta_{pt}^{(s)}$}
                \EndFor
            \EndFor
        \EndFor
        \State {Normalize $P(z_i|X,\theta^{(s)})$ as in Eq~\eqref{eqn:mstep}}
        \State {$s +=1$}
    \EndWhile\\
    \Return {$P(p|t)$}
\end{algorithmic}
\end{algorithm}
}

\IncMargin{1em}
\begin{algorithm}
\small
 \KwData{$X$;}
 \KwResult{$P(p|t)$;}
    Initialize the iteration counter $s \leftarrow 0$; \\
    Initialize the parameter $\theta^{(0)}$; \\
    \While{$\theta$ not converged}{
        \nonl //E-step \;
        \For {$i=1...m$}{
            Estimate $P(z_i|X,\theta^{(s)})$ by Eq~\eqref{eqn:estep} \;
        }
        \nonl //M-step \;
        \For {$i=1...m$ }{
            \For {all $t \in T$ for $q_i,e_i$ with $P(t|q_i,e_i)>0$ \label{line:em:1}}{
                \For {all $p \in P$ with $P(v_i|e_i,p)>0$}{
                    $\theta_{pt}^{(s+1)}+=P(z_i=(p,t)|X,\theta^{(s)})$ \;
                }
            }
        }
        Normalize $\theta_{pt}^{(s+1)}$ as in Eq~\eqref{eqn:mstep} \;
        $s +=1$ \;
    }
    \Return {$P(p|t)$}
 \caption{\bf \scriptsize EM Algorithm for Predicate Inference}
 \label{algo:em}
\end{algorithm}

{\bf Overall Complexity of EM algorithm:} Suppose we repeat the EM algorithm $k$ times, the overall complexity thus is $O(km)$.


\section{Answering Complex Questions}
\label{sec:complex}

In this section, we elaborate how we answer complex questions.
We first formalize the problem as an optimization problem in Sec 5.1. Then, we elaborate the optimization metric
and our algorithm in Sec 5.2 and Sec 5.3, respectively.


\subsection{Problem Statement}
\label{sec:complex:prob}
We focus on the {\it complex questions} which are composed of a sequence of BFQs. For example, question ${\textcircled{f}}$ in Table~\ref{tab:question} can be decomposed into two BFQs: (1) Barack Obama's wife (Michelle Obama); (2) When was Michelle Obama born? (1964).
Clearly, the answering of the second question relies on the answer of the first question.

A {\bf divide-and-conquer framework} can be naturally leveraged to answer complex questions: (1) we first decompose the question into a sequence of BFQs, (2) then we answer each BFQ sequentially. Since we have shown how to answer BFQ in Sec~\ref{sec:kbqa}, the key issue in this section is the decomposition.

We highlight that in the decomposed question sequence, each question except the first one is a question string with an entity variable. The question sequence can only be materialized after the variable is assigned with a specific entity, which is the answer of the immediately previous question. Continue the example above, the second question {\tt When was Michelle Obama born?} is {\tt When was $\$e$ born?} in the question sequence. $\$e$ here is the variable representing the answer of the first question {\tt Barack Obama's wife}.
Hence, given a complex question $q$, we need to decompose it into a sequence of $k$ questions $\mathcal{A}=(\check{q}_i)_{i=0}^k$ such that:

\begin{itemize}[leftmargin=0.4cm]
\setlength\itemsep{0em}
\item Each $\check{q}_i$ ($i>0$) is a BFQ with entity variable $e_i$, whose value is the answer of $\check{q}_{i-1}$.
\item $\check{q}_0$ is a BFQ that its entity is equal to the entity of $q$.
\end{itemize}
\begin{example}[Question sequence]
\label{exam:ts}
Consider the question $\textcircled{f}$ in Table~\ref{tab:question}. One natural question sequence is $\check{q}_0$= {\tt Barack Obama's wife} and $\check{q}_1=$ {\tt When was $\$e_1$ born?}.
We can also substitute an arbitrary substring to construct the question sequence, such as $\check{q}'_0$ ={\tt was Barack Obama's wife born} and $\check{q}'_1$={\tt When $\$e$?}. However, the later question sequence is invalid since $\check{q}'_0$ is neither an answerable question nor a BFQ.
\end{example}

Given a complex question, we construct a question sequence in a recursive way. We first replace a substring with an entity variable. If the substring is a BFQ that can be directly answered, it is $q_0$. Otherwise, we continue the above procedure on the substring until we meet a BFQ or the substring is a single word. However, as shown in Example~\ref{exam:ts}, many question decompositions are not valid (answerable). Hence, we need to measure how likely a decomposition sequence is answerable.
More formally, let $\mathbb{A}(q)$ be the set of all possible decompositions of $q$.
For a decomposition $\mathcal{A} \in \mathbb{A}(q)$, let $P(\mathcal{A})$ be the probability that $\mathcal{A}$ is a valid (answerable) question sequence. Out problem thus is reduced to
\begin{equation}
\small
\label{eqn:argmaxvqs}
\mathop{\argmax}_{\mathcal{A}\in \mathbb{A}(q)} P(\mathcal{A})
\end{equation}

Next, we elaborate the estimation of $P(\mathcal{A})$ and how we solve the optimization problem efficiently in Sec~\ref{sec:complex:metric} and~\ref{sec:complex:solution}, respectively.



\subsection{Metric}
\label{sec:complex:metric}
\nop{
A BFSS consists of a series $reasonable$ subquestions. But what is a reasonable subquestion? Regardless of the ``reasonable'', due to the definition of BFSS, there could be multiple candidate BFSS. For example, $q_1'$=``When was \$e?'', $q_2'$=``Barack Obama's wife born?'' is also a candidate BFSS, although it looks not that good.
}


\nop{
The intuition for measuring one subquestion is, we want to find subquestions that contains a clear binary relation. More specific, we want the subquestion to be: {\it each time the subquestion occur, it's very likely that it represents a binary relation in the question.} So we define the goodness as the probability of this sub-template representing a binary relation in the question, divided by the probability of the probability this subquestion occuring in the question.
}


The basic intuition is that $\mathcal{A}=(\check{q}_i)_{i=0}^k$ is a valid question sequence if each individual question $\check{q}_i$ is valid. Hence, we first estimate $P(\check{q}_i)$ (the probability that $q_i$ is a valid question), and then aggregate each $P(\check{q}_i)$ to compute $P(\mathcal{A})$.

We use QA corpora to estimate $P(\check{q}_i)$.
$\check{q}$ is a BFQ with entity variable $\$e$. A question $q$ matches $\check{q}$, if we can get $\check{q}$ by replacing a substring of $q$ with \$e. We say \emph{the match is \emph{valid}, if the replaced substring is a mention of the entity in $q$}. For example, {\it When was Michelle Obama born?} matches {\it When was \$e born?} and {\it When was \$e?}. However, only the former one is valid since only {\it Michelle Obama} is an entity.
We denote the number of all questions in the QA corpora that matches $\check{q}$ as $f_o(\check{q})$, and the number of questions that validly matches $\check{q}$ as $f_v(\check{q})$. Both $f_v(\check{q}_i)$ and $f_o(\check{q}_i)$ are counted by the QA corpora. We estimate $P(\check{q}_i)$ by:
\begin{equation}
\small
\label{eqn:pvalidq}
P(\check{q}_i)=\frac{f_v(\check{q}_i)}{f_o(\check{q}_i)}
\end{equation}
The rationality is clear: the more valid match the more likely $\check{q}_i$ is answerable. $f_o(\check{q}_i)$ is used to punish the over-generalized question pattern. We show an example of $P(\check{q}_i)$ below.
\begin{example}
Suppose $\check{q}_1=$ {\tt When was $\$e$ born?}, $\check{q}_2=$ {\tt When $\$e$?}, the QA corpora is shown in Table~\ref{tab:qa}. Clearly, $q_1$ satisfies the patterns of $\check{q}_1$ and $\check{q}_2$. However, only $\check{q}_1$ is a valid pattern for $q_1$ since when matching $q_1$ to $\check{q}_1$, the replaced substring corresponds to a valid entity ``Barack Obama''. Thus we have $f_v(\check{q}_1)=f_o(\check{q}_1)=f_o(\check{q}_2)=2$. However, $f_v(\check{q}_0)=0$. Due to Eq~\eqref{eqn:pvalidq}, $P(\check{q}_1)=1$, $P(\check{q}_2)=0$. 
\end{example}

Given each $P(\check{q}_i)$,  we define $P(\mathcal{A})$. We assume that each $\check{q}_i$ in $\mathcal{A}$ being valid are independent. A question sequence $\mathcal{A}$ is valid if and only if all $\check{q}_i$ in it are valid. So we compute $P(\mathcal{A})$ by:
\begin{equation}
\small
P(\mathcal{A})=\prod_{\check{q} \in \mathcal{A}} P(\check{q})
\end{equation}

\nop{
\begin{example}
Consider the QA corpora in Table~\ref{tab:question}. The occurrence of $t_1$=``How \$?'' and $t_2$=``How many people are there in \$'' are both equal to 1, because they both occur once in \textcircled{a}. Since \textcircled{a} matches $t_2$, $g(t_2)=1$. And since no question matches $t_1$, $g(t_1)=0$.
\end{example}
}

\subsection{Algorithm}
\label{sec:complex:solution}

Given $P(\mathcal{A})$, our goal is to find the question sequence maximizing $P(\mathcal{A})$. This is not trivial due to the huge search space. Consider a complex question $q$ of \emph{length} $|q|$, i.e., the number of words in $q$. There are overall $O(|q|^2)$ substrings of $q$. If $q$ finally is decomposed into $k$ sub-questions, the entire search space will be $O(|q|^{2k})$, which is unacceptable. In this paper, we proposed a dynamic programming based solution to solve our optimization problem, with complexity $O(|q|^4)$. Our solution is developed upon the {\it local optimality} property of the optimization problem.
We establish this property in Theorem~\ref{theo:bpp}.

\begin{theorem}[Local Optimality]
\label{theo:bpp}
Given a complex question $q$, let $\mathcal{A}^*(q)=(\check{q}_0^*, ..., \check{q}_k^*)$ be the optimal decomposition of $q$, then $\forall 1 \le i \le k$, $\exists q_i \subset q$, $\mathcal{A}^*(q_i)=(\check{q}_0^*,..,\check{q}_i^*)$ is the optimal decomposition of $q_i$.
\end{theorem}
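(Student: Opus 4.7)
The statement has the flavor of an optimal-substructure lemma that justifies a dynamic programming recursion, so I would attack it with the standard cut-and-paste argument. The whole point is that $P(\mathcal{A}) = \prod_{\check{q}\in\mathcal{A}} P(\check{q})$ factorizes multiplicatively over the individual sub-questions, so any improvement on a contiguous prefix of the decomposition lifts immediately to an improvement on the whole decomposition of $q$, contradicting the assumed optimality of $\mathcal{A}^*(q)$.

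The first step is to make the candidate substring $q_i$ explicit. Following the recursive construction of a decomposition described in Sec~5.1, I would define $q_i$ inductively: $q_0$ is the substring of $q$ that equals $\check{q}_0^*$ (the deepest BFQ, which contains no entity variable); $q_i$ is obtained from $q_{i-1}$ by substituting it back into the slot $\$e_i$ inside $\check{q}_i^*$, and in particular $q_k=q$. By construction $q_i$ is still a substring of $q$, and $(\check{q}_0^*, \ldots, \check{q}_i^*)$ is a valid decomposition of $q_i$ in the sense of Sec~5.1, so $P(\mathcal{A}^*(q_i))\ge \prod_{j=0}^{i} P(\check{q}_j^*)$.

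For the reverse inequality, suppose toward a contradiction that some other valid decomposition $\mathcal{B}=(\check{r}_0,\ldots,\check{r}_{k'})$ of $q_i$ satisfies $P(\mathcal{B})>\prod_{j=0}^{i} P(\check{q}_j^*)$. I would then splice: form $\mathcal{A}'' = (\check{r}_0,\ldots,\check{r}_{k'},\check{q}_{i+1}^*, \ldots, \check{q}_k^*)$. Because the tail $(\check{q}_{i+1}^*,\ldots,\check{q}_k^*)$ only touches $q_i$ through the single entity variable $\$e_{i+1}$ that absorbs the answer of the last sub-question of whatever decomposes $q_i$, $\mathcal{A}''$ is still a valid decomposition of $q$. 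Using the multiplicative form of $P(\mathcal{A})$,
\begin{equation*}
P(\mathcal{A}'') = P(\mathcal{B})\cdot\prod_{j=i+1}^{k} P(\check{q}_j^*) > \prod_{j=0}^{i} P(\check{q}_j^*)\cdot\prod_{j=i+1}^{k} P(\check{q}_j^*) = P(\mathcal{A}^*(q)),
\end{equation*}
contradicting the optimality of $\mathcal{A}^*(q)$. Therefore $(\check{q}_0^*,\ldots,\check{q}_i^*)$ must itself be optimal for $q_i$, which is exactly the claim.

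I expect the only real obstacle to be the splicing step: one has to check that replacing the prefix by $\mathcal{B}$ yields a syntactically well-formed question sequence, i.e., that the entity variable of $\check{q}_{i+1}^*$ is still correctly bound to the answer of the last question of $\mathcal{B}$ (rather than of $\check{q}_i^*$). This is immediate from the decomposition semantics in Sec~5.1, where the bound entity variable only depends on being the answer of the previous question, independent of how that answer was produced. Once this is noted, the multiplicative factorization does all the remaining work and no further calculation is needed.
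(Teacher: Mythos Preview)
The paper actually states Theorem~2 without proof: immediately after the theorem environment it writes ``Theorem~2 suggests a dynamic programming (DP) algorithm'' and proceeds directly to the recursion and Algorithm~2. So there is no proof in the paper to compare against; the authors treat the optimal-substructure property as self-evident given the multiplicative form $P(\mathcal{A})=\prod_{\check{q}\in\mathcal{A}}P(\check{q})$.

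Your cut-and-paste argument is exactly the right way to fill this gap, and it is correct. The two ingredients you isolate---the inductive construction of the intermediate substring $q_i$ from the recursive decomposition of Sec~5.1, and the exchange step that splices an alternative prefix $\mathcal{B}$ onto the fixed suffix $(\check{q}_{i+1}^*,\ldots,\check{q}_k^*)$---are precisely what the multiplicative objective permits. Your remark that the only delicate point is the well-formedness of the spliced sequence is also on target, and your justification (the entity variable in $\check{q}_{i+1}^*$ is bound only to the \emph{answer} of the preceding question, not to its syntactic identity) is adequate given the semantics laid out in Sec~5.1. In short, you have supplied the proof the paper omits, by the expected route.
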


\nop{
Theorem~2 implies that we can use a dynamic programming as the framework to find $\mathcal{A}^*(q)$. Each string $q_1$ with substring $q_2$ can use $\mathcal{A}^*(q_2)$ to update $\mathcal{A}^*(q_1)$. So we store the value of each visited substring to stop recursively decomposition. More specific, we use $\mathcal{A}^*(q)$ as a map object, which maps each string to its optimal question sequence.

\IncMargin{0.5em}
\begin{algorithm}
\small
 \KwData{$q$;}
 \KwResult{$\mathcal{A}^*(q)$;}
\SetKwBlock{Proc}{$decompose(q)$}{}
\Proc{\label{line:bpp:1}
 \If {$q$ is visited}{
    \Return
 }
 $\mathcal{A}^*(q)=\{q\}$ \; \label{line:bpp:2}
 \If {$q$ is an entity} {
    $P(\mathcal{A}^*(q))=1$
 }
 \Else {$P(\mathcal{A}^*(q))=0$ \label{line:bpp:3} }
 \For {each substring $q_1$ of $q$ \label{line:bpp:4} }{
    $decompose(q_1)$ \;
    $\check{q} \leftarrow$ Replace $q_1$ in $q$ with $\$e$ \;
    \If {$P(\mathcal{A}^*(q))<P(\check{q})P(\mathcal{A}^*(q_2))$}{
        $P(\mathcal{A}(q_1))=P(\check{q})P(\mathcal{A}^*(q_1))$ \;
        $\mathcal{A}^*(q_1)\leftarrow$ append $\check{q}$ at the end of $\mathcal{A}^*(q_2)$  \label{line:bpp:5} \;
        }
 }
}
    $decompose(q)$ \;
    \Return $\mathcal{A}^*(q)$
 \caption{Complex Question Decomposition}
\label{alg:bpp}
\end{algorithm}

The detailed procedure is shown in Algorithm~\ref{alg:bpp}: we judge whether $q$ is visited in Line~\ref{line:bpp:1}. If so, since the previous $\mathcal{A}^*(q)$ is directly used due to Theorem 2 and the procedure stops. Otherwise, we initialize $\mathcal{A}^*(q)$ in Line~\ref{line:bpp:2}-\ref{line:bpp:3}. We enumerate all possible outmost decomposition of $q$ in Line~\ref{line:bpp:4}-\ref{line:bpp:5}.

The complexity of each $decompose()$ itself is $O(|q|^2)$. Since there are $O(|q|^2)$ different strings as the input of $decompose()$, and each of the string is only visited once, the entire complexity is $O(|q|^4)$. In our QA corpora, over 99\% questions have less than 23 words ($|q|<23$). So this complexity is acceptable.
}

Theorem~2 suggests a dynamic programming (DP) algorithm.
Consider a substring $q_i$ of $q$, $q_i$ is either (1) a primitive BFQ (non-decomposable) or (2) a string that can be further decomposed. For case (1),  $\mathcal{A}^*(q_i)$ contains a single element, i.e., $q_i$ itself. For case (2), $\mathcal{A}^*(q_i)=\mathcal{A}^*(q_j)\oplus r(q_i, q_j)$, where $q_j \subset q_i$ is the one with the maximal $P(r(q_i,q_j))P(\mathcal{A}^*(q_j))$, $\oplus$ is the operation that appends a question at the end of a question sequence, and $r(q_i, q_j)$ is the question generated by replacing $q_j$ in $q_i$ with a placeholder ``\$e''.
Thus, we derive the dynamic programming equation:
\begin{equation}
\small
P(\mathcal{A}^*(q_i))=\max\{
\delta(q_i),
\max_{q_j \subset q_i} \{ P(r(q_i,q_j))P(\mathcal{A}^*(q_j)) \}  \}
\end{equation}
where $\delta(q_i)$ is the indicator function to determine whether $q_1$ is a primitive BFQ. That is $\delta(q_i)=1$ when $q_i$ is a primitive BFQ, or $\delta(q_i)=0$ otherwise.

Algorithm~\ref{alg:bpp} outlines our dynamic programming algorithm. We enumerate all substrings of $q$ in the outer loop (Line~\ref{line:1}). Within each loop, we first initialize $\mathcal{A}^*(q_i)$ and $P(\mathcal{A}^*(q_i))$ (Line~2-4). In the inner loop, we enumerate all substrings $q_j$ of $q_i$ (Line~\ref{line:2}), and update $\mathcal{A}^*(q_i)$ and $P(\mathcal{A}^*(q_i))$ (Line~\ref{line:4}-\ref{line:bpp:6}).
Note that we enumerate all $q_i$s \emph{in the ascending order of their lengths}, which ensures that $P(\mathcal{A}^*())$ and $\mathcal{A}^*()$ are known for each enumerated $q_j$.

The complexity of Algorithm~\ref{alg:bpp} is $O(|q|^4)$, since both loops enumerates $O(|q|^2)$ substrings. In our QA corpora, over 99\% questions contain less than 23 words ($|q|<23$). So this complexity is acceptable.

\setcounter{algocf}{1}
\IncMargin{1em}
\begin{algorithm}
\small
\SetKwHangingKw{Func}{Function}{}{}
 \KwData{$q$;}
 \KwResult{$\mathcal{A}^*(q)$;}
    \For {each substring $q_i$ of $q$, with length $|q_i|$ from 1..$|q|$ \label{line:1}}{
        $P(\mathcal{A}^*(q_i))\leftarrow\delta(q_i)$ \label{line:bpp:5} \;
         \If {$\delta(q_i)=1$} {$\mathcal{A}^*(q_i)\leftarrow \{q_i\}$; } \label{line:bpp:2}
        \For {each substring $q_j$ of $q_i$ \label{line:2}}{
            $r(q_i,q_j) \leftarrow$ Replace $q_j$ in $q_i$ with $``\$e"$ \; \label{line:3}
            \If {$P(\mathcal{A}^*(q_i))<P(r(q_i,q_j))P(\mathcal{A}^*(q_j))$ \label{line:4}}{
                $\mathcal{A}^*(q_i)\leftarrow \mathcal{A}^*(q_j)\oplus r(q_i, q_j)$ \;
                $P(\mathcal{A}^*(q_i))\leftarrow P(r(q_i,q_j))P(\mathcal{A}^*(q_j))$ \label{line:bpp:6} \;
            }
        }
    }
    \Return $\mathcal{A}^*(q)$
 \caption{\bf \scriptsize Complex Question Decomposition}
\label{alg:bpp}
\end{algorithm}

\nop{
\begin{algorithm}[h]
\small
 \caption{Complex Question Decomposition}
   \label{alg:bpp}
\begin{algorithmic}[1]
\Require{$q$;}
\Ensure{$\mathcal{A}^*(q)$;}
    \For {$l=1...q.length$}
        \For {all substring $q_1$ of $q$, with length $|q_1|$ from 1..$|q|$ }
            \For {all substring $q_2$ of $q_1$}
                \State {$\check{q} \leftarrow$ Replace $q_2$ in $q_1$ with $\$e$}
                \If {$P(\mathcal{A}^*(q_1))<P(\check{q})P(\mathcal{A}^*(q_2))$}
                    \State {$\mathcal{A}(q_1)=P(\check{q})P(\mathcal{A}^*(q_2))$}
                    \State {$\mathcal{A}^*(q_1)\leftarrow$ append $\check{q}$ at the end of $\mathcal{A}^*(q_2)$}
                \EndIf
            \EndFor
        \EndFor
    \EndFor
    \\
    \Return $\mathcal{A}^*(q)$
\end{algorithmic}
\end{algorithm}

\begin{algorithm}
\small
 \KwData{$q$;}
 \KwResult{$\mathcal{A}^*(q)$;}
        \For {all substring $q_1$ of $q$, with length $|q_1|$ from 1..$|q|$ \label{line:1}}{
            \For {all substring $q_2$ of $q_1$ \label{line:2}}{
                $\check{q} \leftarrow$ Replace $q_2$ in $q_1$ with $\$e$ \; \label{line:3}
                \If {$P(\mathcal{A}^*(q_1))<P(\check{q})P(\mathcal{A}^*(q_2))$ \label{line:4}}{
                    $\mathcal{A}(q_1)=P(\check{q})P(\mathcal{A}^*(q_2))$ \;
                    $\mathcal{A}^*(q_1)\leftarrow$ append $\check{q}$ at the end of $\mathcal{A}^*(q_2)$ \;
                }
            }
    }
    \Return $\mathcal{A}^*(q)$
 \caption{Complex Question Decomposition}
\label{alg:bpp}
\end{algorithm}

We enumerate $O(|q|^2)$ substrings as $q_1$ and $O(|q|^2)$ substrings as $q_2$ for each $q_1$. So the time complexity of Algorithm~\ref{alg:bpp} is $O(|q|^4)$. In our QA corpora, over 99\% questions have less than 23 words ($|q|<23$). So this complexity is acceptable.
}

\vspace{-0.3cm}
\section{Predicate Expansion}
\label{sec:expansion}
In a knowledge base, many facts are not expressed by a direct predicate, but by a path consisting of multiple predicates. As shown in Figure~\ref{fig:barackobama}, ``spouse of''  relationship is represented by three predicates $marriage \rightarrow person \rightarrow name$. We denote these multi-predicate paths as \emph{expanded predicates}. Answering questions over expanded predicates highly improves the coverage of KBQA.

\begin{definition}[Expanded Predicate]
\label{def:ep}
An expanded predicate $p^+$ is a predicate sequence $p^+ = (p_1, ..., p_k)$. We refer to
$k$ as the {\it length} of the $p^+$. We say $p^+$ connects subject $s$ and object $o$, if there exists a sequence of subjects $s=(s_1,s_2,...,s_k)$ such that
$\forall 1 \le i <k, (s_i,p_i,s_{i+1}) \in \mathcal{K}$ and $(s_k,p_k,o) \in \mathcal{K}$. Similar to $(s,p,o) \in  \mathcal{K}$ representing $p$ connects $s$ and $o$, we denote $p^+$ connecting $s$ and $o$ as $(s,p^+,o) \in \mathcal{K}$.
\end{definition}

The KBQA model proposed in Section 3 in general is flexible for expanded predicates. We only need some slight changes for the adaptation. In Sec~\ref{sec:expansion:learn}, we show such adaptation. Then we show how to scale expanded predicates for billion scale knowledge bases in Sec~\ref{sec:expansion:generation}. Finally, we show how to select a reasonable predicate length to pursue highest effectiveness in Sec~\ref{sec:expansion:selection}.


\subsection{KBQA for Expanded Predicates}
\label{sec:expansion:learn}
Recall that the framework of KBQA for single predicate consists of two major parts. In the offline part, we compute $P(p|t)$, the predicate distribution for given templates; in the online part, we extract the question's template $t$, and compute the predicate through $P(p|t)$.
When changing $p$ to $p^{+}$, we do the following adjustments:

In the {\bf offline} part, we learn question templates for expanded predicates, i.e., computing $P(p^+|t)$.
The computation of $P(p^+|t)$ only relies on knowing whether $(e,p^+,v)$ is in $\mathcal{K}$. We can compute this cardinality if we generate all $(e,p^+,v) \in \mathcal{K}$. We show this generation process in Sec~\ref{sec:expansion:generation}

In the {\bf online} part, we use expanded predicates to answer question. To compute $P(v|e,p^+)$, we can compute it by exploring the RDF knowledge base starting from $e$ and going through $p^+$. For example, let $p^{+}=marriage \rightarrow person \rightarrow name$, to compute $P(v|Barack\ Obama,p^+)$ from the knowledge base in Figure~\ref{fig:barackobama}, we start the traverse from node $a$, then go through $b$, $c$. Finally we have $P(Michelle\ Obama|Barack\ Obama, p^+)=1$.


\subsection{Generation of Expanded Predicates}
\label{sec:expansion:generation}
A naive approach to generate all expanded predicates is breadth-first search (BFS) starting from each node in the knowledge base. However, the number of expanded predicates grows exponentially with the predicates' length. So the cost is unacceptable for a billion scale knowledge base. 
To do this, we first set a limit on the predicate length $k$ to improve the scalability. That is we only search expanded predicate with length no larger than $k$. In the next subsection, we will show how to set a reasonable $k$. In this subsection, we improve the scalability from another two aspects: (1) reduction on $s$; (2) memory-efficient BFS.

\textbf{Reduction on $s$}: During the offline inference process, we are only interested in $s$ which occurred in at least one question in the QA corpus. Hence, {\it we only use subjects occurring in the questions from QA corpus as starting nodes for the BFS exploration.} This strategy significantly reduces the number of $(s,p^+,o)$ triples to be generated. Because the number of such entities is far less than the number of those in a billion-scale knowledge base. For the knowledge base (1.5 billion entities) and QA corpus (0.79 million distinct entities) we use, this filtering reduces the number of $(s,p^+,o)$ triples $1500/0.79=1899$ times theoretically.

\textbf{Memory-Efficient BFS}:
To enable the BFS on a knowledge base of 1.1TB, we
use a {\it disk based multi-source BFS} algorithm.
At the very beginning, we load all entities occurring in QA corpus (denoted by $S_0$) into memory and build the hash index on $S_0$. In the first round, by scanning all RDF triples resident on disk once and joining the subjects of triples with $S_0$, we get all $(s,p^+,o)$ with length 1. The hash index built upon $S_0$ allows a linear time joining. In the second round, we load all the triples found so far into memory and build hash index on all objects $o$ (denoted by $S_1$). Then we scan the RDF again and join the subject of RDF tripes with $s \in S_1$. Now we get all $(s,p^+,o)$ with length 2, and load them into memory. We repeat the above index+scan+join operation $k$ times to get all $(s,p^+,o)$ with $p^+.length \le k$.

The algorithm is efficient since our time cost is mainly spent on scanning the knowledge base $k$ times.
The index building and join are executed in memory, and the time cost is negligible compared to the disk I/O. Note that the number of expanded predicate starting from $S_0$ is always significantly smaller than the size of knowledge base, thus can be hold in memory. For the knowledge base (KBA, please refer to experiment section for more details) and QA corpus we use, we only need to store 21M $(s,p^+,o)$ triples. So it's easy to load them into memory. Suppose the size of $\mathcal{K}$ is $|\mathcal{K}|$, and the number of $(s,p^+,o)$ triples found is $\#spo$. It consumes $O(\#spo)$ memory, and the time complexity is $O(|\mathcal{K}|+\#spo)$.



\nop{
\begin{algorithm}[h]
   \caption{Triple Expansion}
\begin{algorithmic}[1]
\Require{$k, \mathcal{K}, E_s$;}
\Ensure{$Triples$};
    \For {all $(s,p,o) \in \mathcal{K}$ in disk}
        \If {$s \in E_s$}
            \State $T_1 \leftarrow T_1 \cup (s,p,o)$
        \EndIf
    \EndFor
    \For {$i=2...k$}
        \For {all $(s,p,o) \in \mathcal{K}$ in disk} \label{line:generatepath:3}
            \For {all $(s',p^+,s) \in T_{i-1}$}
                \State $T_i \leftarrow T_i \cup (s',p^++p,o)$
            \EndFor
        \EndFor
    \EndFor \\
    \Return $\cup_{1 \le i \le k}T_i$;
\end{algorithmic}
   \label{alg:generatepath}
\end{algorithm}
}

\nop{
\subsection{Definitions and Challenges}
\label{sec:expansion:def}
We first define the expanded predicate in Definition~\ref{def:ep}. With expended predicates, we can express much more facts in the knowledge base.
We show this in Theorem~\ref{theo:pathnumber}.

\begin{definition}[Expanded Predicate]
\label{def:ep}
An expanded predicate $p^+$ is a predicate sequence $p^+ = (p_1, ..., p_k)$. We refer to
$k$ as the {\it length} of the $p^+$.  $p^+ $ is an expanded predicate between $s$ and $o$ (denoted by $(s,p^+, o)$), if there exists a sequence of subjects $s=(s_1,s_2,...,s_k)$ such that
$\forall 1 \le i <k, (s_i,p_i,s_{i+1}) \in \mathcal{K}$ and $(s_k,p_k,o) \in \mathcal{K}$.
\end{definition}



\begin{theorem}
\label{theo:pathnumber}
For an RDF knowledge base $\mathcal{K}$ with $n$ entities and $m$
triples, the expected number of $(s,p^+,o)$ in $\mathcal{K}$ with the length of $p^+$ no more than $k$ is
\begin{equation}
\Sigma_{1 \le i \le k} n*(\frac{m}{n})^{i-1}
\end{equation}
\end{theorem}

This theorem is easy to prove. Consider the exploration procedure to
find all the expanded predicates starting from a certain entity. There
are $\frac{m}{n}$ choices for each step of exploration. A $k$-step
exploration results into a search space of size $\Sigma_{1 \le i \le
  k} (\frac{m}{n})^{i-1}$.

This theorem sufficiently shows the
challenge for predicate expansion on a large scale knowledge base. The
knowledge base we used contains 1.5 billion entities and 11.5 billion
triples. Thus the three-step expansion leads to overall 776 billion
$(s,p^+,o)$ triples. Avoiding the huge search space is the
key to successfully use these expanded predicates. Fortunately in the next subsection
we find that a three-step exploration is good enough to find meaningful expanded predicate.

Once
the best $k$ is determined, {\it how to efficiently explore the
  billion facts to find all desired $(s,p^+,o)$} is also a
challenging problem. We show how we solve these two problems in the
next two subsections, respectively.
}

\subsection{Selection of $k$}
\label{sec:expansion:selection}

The length limit $k$ of expanded predicate affects the {\it
effectiveness} of predicate expansion. A larger $k$
leads to more $(s,p^+,o)$ triples, and consequently higher {\it coverage} of
questions. However, it also introduces more meaningless $(s,p^+,o)$ triples. For example, the expanded predicate $marriage \rightarrow person \rightarrow dob$ in Figure~\ref{fig:barackobama} connects ``Barack Obama'' and ``1964''. But they have no obvious relations and are useless for KBQA.

The predicate expansion should select a $k$ that allows most meaningful relations and avoids most meaningless relations. We estimate the best $k$ using Infobox in Wikipedia. Infobox stores facts about entities and most entries in Infobox are subject-predicate-object triples. Facts in Infobox can be used as meaningful relations.
Hence, our idea is  {\it sampling $(s,p^+,o)$ triples with length $k$ and see how many of them have correspondence in Infobox}. We expect to see a significant drop for an excessive $k$.





Specifically, we select top 17,000 entities from the RDF knowledge base $\mathcal{K}$
ordered by their frequencies. The frequency of an entity $e$ is
defined as the number of $(s,p,o)$ triples in $\mathcal{K}$ so that $e=s$. We choose
these entities because they have richer facts, and therefore are more trustworthy. For these entities, we generate their $(s,p^+,o)$ triples at length $k$ using the BFS procedure proposed in Sec~\ref{sec:expansion:generation}. Then, for each $k$, we count the number of these $(s,p^+,o)$ that can find its corresponding SPO triples in Wikipedia Infobox.
More formally, let $E$ be the sampled entity set, and $SPO_k$ be $(s,P^+,o) \in \mathcal{K}$ with length $k$. We define $valid(k)$ to measure the influence of $k$ in finding meaningful relations as follows:
\begin{flalign}
\scriptsize
\begin{split}
\label{eqn:valid}
valid(k) =\sum_{s\in E} |\{(s,p^+,o)|(s,p^+,o) \in SPO_k, \\ \exists p,  (s,p,o) \in Infobox \}|
\end{split}
\end{flalign}
The results of $valid(k)$ over KBA and DBpedia are shown in Table~\ref{tab:wikidistribution}. Note that for expanded predicates with length $\ge 2$, we only consider those which end with $name$. This is because we found entities and values connected by other expanded predicates always have some very weak relations and should be discarded. The number of valid expanded predicates significantly drops when $k=3$. This suggests that most meaningful facts are represented within this length. So we choose $k=3$ in this paper.

\begin{table}[!htb]
\vspace{-0.2cm}
\scriptsize
\begin{center}
\caption{\bf valid(k)}
\begin{tabular}{   c | c | c | c }
  \hline
     k & 1 & 2 & 3 \\ \hline
     \hline
     KBA     & 14005 & 16028 & 2438 \\ \hline
     DBpedia & 352811 & 496964 & 2364 \\ \hline
\end{tabular}
\label{tab:wikidistribution}
\end{center}
\vspace{-1cm}
\end{table}

\nop{
\subsection{Implementation}
\label{sec:expansion:imple}
Now we show how to generate $(e,p^+,v)$ triples. The bottleneck of the computation on a knowledge base of 1.1TB is obviously the memory. Hence, we propose a \emph{disk based multi-source breadth first search} solution, which doesn't need to load the entire RDF graph into memory. Before the search, we first remove useless entities. 
Notice that we only need to generate $(e,p^+,v)$ triples for the entities mentioned in the QA corpora. We denote these entities as $E_s$. We get $E_s$ by entity identification in the preprocessing.

Because the bottleneck of the search in a terabyte knowledge base is the memory. The memory based search (BFS or DFS) is not acceptable for such a large scale data since they need to load the entire graph into memory, which . Our solution doesn't need to store the entire RDF graph into memory.

\paragraph*{Disk Based Multi-Source Breadth First Search}

The disk based solution only needs to scan the RDF data $k$ times from disk. Before the first round of scan, we only load $E_s$ into memory. So we can get all $(s,p^+,o)$ satisfying $s\in E_s, p^+.length=1$ in the first round. These triples are exactly $(s,p,o)$ triples start from $E_s$. Before the second round, we
load all these triples into memory. So we get all triples that are incident with the existing triples in the second round. Thus we get all $(s,p^+,o)$ that $s \in E_s, p^+.length=2$. This step is actually a self-join operation on the RDF data. In this way, we get $(s,p^+,o)$ from length
1 to length 2. We repeat this operation $k$ times to get $\{(s,p^+,o)|
p^+.length \le k , s \in E_s\}$. Similar with Theorem~\ref{theo:pathnumber}, for an RDF knowledge base with $n$ entities, $m$ triples, and length limit $k$, the expected number of $(e,p^+,v) \in \mathcal{K}$ triples is
\begin{equation}
\label{eqn:samplenumber}
\Sigma_{1 \le i \le k} |E_s|*(\frac{m}{n})^{i}.
\end{equation}
Due to Eq.~(\ref{eqn:samplenumber}), given the fact that (1) $|E_s|$ is much smaller than the number of entities in $\mathcal{K}$, and (2) $k$ is always very small, which we will elaborate it in Sec~\ref{sec:expansion:selection}, we can expect acceptable number of $(e,p^+,v)$ triples generated by our approach above.
}




\nop{
\begin{definition}[Coverage]
For a relation $r=(subject,relation,object)$, we say $r$ is covered by $k$ step predicate expansion $\iff \exists p^+, p^+.length \le k, (subject,p^+,object)$. So given a set of relations $R$, we define the coverage of $k$ step predicate expansion as $\frac{\#covered\ relations\ of\ R}{|R|}$.
\end{definition}

We use Example~\ref{exam:relationcoverage} to illustrate these two definitions.

\begin{example}
\label{exam:relationcoverage}
Consider the knowledge base in Figure~\ref{fig:barackobama}. The
relation $r_1=\{Barack Obama, spouse, Michelle Obama\}$ is the spouse
relation of Barack Obama. $r_2=\{Barack Obama, date\ of\ birth,
1971\}$ is the birth date relation. And relation $r_3=\{Barack Obama,
height, 187cm\}$ is the height relation. For $R=\{r_1,r_2,r_3\}$, the
coverage of 1-step, 3-step, and 4-step predicate expansion is
$\frac{1}{3}$, $\frac{2}{3}$, $\frac{2}{3}$, and the coverage of
1-step predicate expansion is $\frac{1}{3}$.
\end{example}
}

\nop{
\begin{algorithm}[h]
   \caption{Triple Expansion}
\begin{algorithmic}[1]
\Require{$k, \mathcal{K}, E_s$;}
\Ensure{$\{(s,p^+,o)| p^+.length \le k , s \in E \}$};
    \State $T_0 \leftarrow E_s$
    \State $Triples \leftarrow \emptyset$
    \For {$i=1...k$}
        \State $T_i \leftarrow \emptyset$
        \For {all $(s,p,o) \in \mathcal{K}$} \label{line:generatepath:3}
            \If {$s \in T_{i-1}$}
                \For {all $(s,p^+) \in expendSet_{i-1}[s]$}
                    \State $Triples \leftarrow Triples \cup (s,p^++p,o)$ \label{line:generatepath:1}
                    \State $T_i \leftarrow T_i \cup \{o\}$
                    \State $expendSet_i[o] \leftarrow expendSet_i[o] \cup \{(s, p^++p)\}$ \label{line:generatepath:2}
                \EndFor
            \EndIf
        \EndFor
    \EndFor
    \Return $Triples$;
\end{algorithmic}
   \label{alg:generatepath}
\end{algorithm}
}



\nop{
\begin{lemma}
\label{lemma:le}
$\arrowvert \{(s,p^+) | \in expandSet\} \arrowvert \le \arrowvert \{(s,p^+,v)|(s,p^+,v) \in Triples\} \arrowvert$
\end{lemma}

\begin{proof}
$\forall (s,p^+) \in expendSet, \exists o,p^+, (s,p^+) \in expendSet_i[o]$. We have $(s,p^+,o) \in Triples$. This means each $(s,p^+)$ has a map in $Triples$. $\forall (s_1,p^+_1),(s_2,p^+_2) \in expendSet$, $(s_1,p^+_1) \neq (s_2,p^+_2) \Rightarrow (s_1,p^+_1,o_1) \neq (s_2,p^+_2,o_2)$. Therefore the lemma is true.
\end{proof}

\begin{theorem}
The expected space complexity of Algorithm~2 has reached its lower bound $O(k|E_s|*(\frac{m}{n})^{k})$.
\end{theorem}
\begin{proof}
For each $(s,p^+,o)$, the space cost is $O(k)$. And for each $(s,p^+) \in expendSet$, the space cost is also $O(k)$. Due to Theorem~\ref{theorem:samplenumber} and Lemma~\ref{lemma:le}, it's easy to compute the space complexity of Algorithm~2 is $O(k|E_s|*(\frac{m}{n})^{k})$.
Note that space complexity of the result set, $Triples$, is also $O(k|E_s|*(\frac{m}{n})^{k})$. Therefore the theorem works.
\end{proof}

\begin{theorem}
The expected time complexity of Algorithm~2 is $O(k|E_s|*(\frac{m}{n})^{k}+mk)$.
\end{theorem}
\begin{proof}
For each $(s,p^+,v) \in Triples$, the time to generate it is cost by Line~\ref{line:generatepath:1} to Line~\ref{line:generatepath:2}. This time complexity is $O(k)$. Due to Theorem~\ref{theorem:samplenumber}, the time complexity of Algorithm~2 is $O(k|E_s|*(\frac{m}{n})^{k}+mk)$.
\end{proof}
}


\section{Experiments}
\label{sec:exp}

In this section, we first elaborate the experimental setup in Sec~\ref{sec:exp:setup}, and verify the rationality of our probabilistic framework in Sec~\ref{sec:rationality}. We evaluate the effectiveness and efficiency of our system in Sec~\ref{sec:exp:effectiveness} and Sec~\ref{sec:exp:efficiency}, respectively. In Sec~\ref{sec:exp:components}, we also investigate the effectiveness of three detailed components of KBQA. 

\subsection{Experimental Setup}
\label{sec:exp:setup}

We ran KBQA on a computer with Intel Xeon CPU, 2.67 GHz, 2 processors, 24 cores, 96 GB memory, 64 bit windows server 2008 R2. We use Trinity.RDF~\cite{zeng2013distributed} as the RDF engine. 

{\bf Knowledge base.} We use three open domain RDF knowledge bases. The business privacy rule prohibits us from publishing the name of the first knowledge base. We refer to it as KBA. KBA contains 1.5 billion entities and 11.5 billion SPO triples, occupying 1.1 TB storage space. The SPO triples cover $2658$ distinct predicates and $1003$ different categories. For the sake of reproducibility, we also evaluated our system on two well-known public knowledge bases Freebase and DBpedia. Freebase contains 116 million entities and 2.9 billion SPO triples, occupying 380 GB storage. DBpedia contains 5.6 million entities, 111 million SPO triples, occupying 14.2 GB storage.


{\bf QA corpus.} The QA corpus contains 41 million QA pairs crawled from Yahoo! Answer. If there are multiple answers for a certain question, we only consider the {\it ``best answer''}.

{\bf Test data.} We evaluated KBQA over WebQuestions~\cite{berant2013semantic}, QALD-5~\cite{unger2015qald}, QALD-3~\cite{unger2013qald} and QALD-1~\cite{ungerqald}, which are designed for QA systems over {\it knowledge bases}.
We present the basic information of these data sets in Table~\ref{tab:testdata}. We are especially interested in the number of questions which can be decomposed into BFQs ($\#BFQ$) since KBQA focuses on answering BFQs.

{
\setlength{\tabcolsep}{2.5pt}
\begin{table}[!htb]
\vspace{-0.2cm}
\begin{center}
\small
\caption{Benchmarks for evaluation.}
\begin{tabular}{  l  | l | l | l || l  | l | l | l  }
\hline
 & \#total  & \#BFQ & ratio & &\#total  & \#BFQ & ratio\\ \hline
 \hline
WebQuestions & 2032 & - & - & QALD-3  & 99 & 41 & 0.41\\ \hline
QALD-5 & 50 & 12  & 0.24 & QALD-1 & 50 & 27 & 0.54\\ \hline
\end{tabular}
\label{tab:testdata}
\end{center}
\vspace{-0.8cm}
\end{table}
}
\nop{
{\bf Competitors.} We compared KBQA with 13 QA systems built over knowledge bases. Their details are shown in Table~\ref{tab:competitor}.

\begin{table}[!htb]
\small
\begin{center}
\begin{tabular}{  l  | l | l | l}
\hline
 System & Source & System & Source \\ \hline
 \hline
 squall2sparql  & Q3\cite{unger2013qald} & Xser~\cite{xu2014answering} & Q5 \\ \hline
 SWIP  & Q3 & APEQ  & Q5 \\ \hline
 CASIA  & Q3 & QAnswer & Q5\\ \hline
 RTV  & Q3 &  SemGraphQA & Q5\\ \hline
 Intui2  & Q3 & YodaQA  & Q5 \\ \hline
 Scalewelis & Q3 & DEANNA  & \cite{yahya2012natural} \\ \hline
 gAnswer & \cite{zou2014natural}  \\ \hline
\end{tabular}
\caption{{\small Competitors. Q5 stands for QALD-5. Q3 stands for QALD-3.}}
\label{tab:competitor}
\end{center}
\vspace{-0.8cm}
\end{table}
}

\nop{
\begin{table}[!htb]
\small
\begin{center}
\begin{tabular}{  l  | l | l | l}
\hline
 System & Source & System & Source \\ \hline
 \hline
 squall2sparql & kb & Q3\cite{unger2013qald} & DEANNA & kb & \cite{yahya2012natural} \\ \hline
 SWIP & kb & Q3 & LymbaPA07 & web & T7\cite{dang2007overview} \\ \hline
 CASIA & kb & Q3 & LCCFerret & web & T7 \\ \hline
 RTV & kb & Q3 & lsv2007c & web & T7 \\ \hline
 Intui2 & kb & Q3 & UofL & web & T7 \\ \hline
 Scalewelis  & kb & Q3 & QASCU1 & web & T7 \\ \hline
 gAnswer & kb & \cite{zou2014natural} & Xser~\cite{xu2014answering} & kb & Q5 \\ \hline
 APEQ & kb & Q5 & QAnswer & kb & Q5 \\ \hline
 SemGraphQA & kb & Q5 & YodaQA & kb & Q5 \\ \hline
\end{tabular}
\caption{Competitors. Q5 stands for QALD-5. Q3 stands for QALD-3~\protect\cite{unger2013qald}, T7 stands for TREC 2007~\protect\cite{dang2007overview}.
}
\label{tab:competitor}
\end{center}
\vspace{-0.8cm}
\end{table}
}


\subsection{Rationality of Probabilistic Framework}
\label{sec:rationality}
We explain why a probabilistic framework is necessary. In each step of the question understanding, there are different choices which bring uncertainty to our decision. We show the number of candidate choices in each step over KBA in Table~\ref{tab:rationality}. Such uncertainty suggests a probabilistic framework.


\begin{table}[!htb]
\vspace{-0.2cm}
\small
\begin{center}
\caption{\bf Statistics for average choices of each random variable.}
\begin{tabular}{  l | l | l }
\hline
  Probability & Explanation & Avg. Count  \\ \hline
  \hline
  $P(e|q)$ & \#entity for a question &  18.7 \\ \hline
  $P(t|e,q)$ & \#templates for a entity-question pair & 2.3 \\ \hline
  $P(p|t)$ & \#predicates for a template &  119.0 \\ \hline
  $P(v|e,p)$ & \#values for a entity-predicate pair & 3.69 \\ \hline
\end{tabular}
\label{tab:rationality}
\end{center}
\vspace{-0.7cm}
\end{table}

As an example, $P(t|e,q)$ is used to represent the uncertainty when translating a question and its entity into a template. For the question {\tt How long is Mississippi River?}, after identifying {\tt Mississippi River}, we still cannot decide the unique category from many candidates, such as RIVER, LOCATION.

\subsection{Effectiveness}
\label{sec:exp:effectiveness}
To evaluate the effectiveness of KBQA, we conduct the following experiments. For the online part, we evaluate the {\it precision} and {\it recall} of question answering. For the offline part, we evaluate the {\it coverage} and {\it precision} of predicate inference.

\subsubsection{Effectiveness of Question Answering}

{\bf Metric for QALD}
A QA system may return null when it believes that there is no answer. So we are interested in the number of questions a QA system processed and returned a non-null answer (not necessarily true) ($\#pro$), and the number of questions whose answers are right ($\#ri$).
However, in reality, the system can only partially correctly answer a question (for example, only finding a part of the correct answers).
Hence we also report the number of questions whose answers are partially right ($\#par$).
Next we define these metrics for KBQA. Once a predicate is found, the answer can be trivially found from RDF knowledge base. Hence, for KBQA, $\#pro$ is the number of questions that KBQA finds a predicate. $\#ri$ is the number of questions for which KBQA finds right predicates. $\#par$ is the number of questions for which KBQA finds partially right predicates.
For example, ``place of birth'' is a partially correct predicate for {\tt Which city was \$person born?}, as it may refer to a country or a village instead of a city.

Now we are ready to define our metrics:  {\it precision} $P$,  {\it partial precision} $P^*$, {\it recall} $R$ and {\it partial recall} $R^*$ as follows:
{
\scriptsize
\begin{flalign*}
\begin{split}
P=\frac{\#ri}{\#pro}; P^*=\frac{\#ri+\#par}{\#pro}; R=\frac{\#ri}{\#total}; R^*=\frac{\#ri+\#par}{\#total}
\end{split}
\end{flalign*}
}
We are also interested in the recall and partial recall with respect to the number of BFQs, denoted by $R_{BFQ}$ and $R^*_{BFQ}$:
{
\begin{flalign*}
\scriptsize
\begin{split}
R_{BFQ}=\frac{\#ri}{\#BFQ}; R^*_{BFQ}=\frac{\#ri+\#par}{\#BFQ}
\end{split}
\end{flalign*}
\vspace{-0.3cm}
}


{\bf Results on QALD-5 and QALD-3} We give the results in Table~\ref{tab:qald5} and Table~\ref{tab:qald3}.
For all the competitors, we directly report their results in their papers.
We found that on all knowledge bases, KBQA beats all other competitors except squall2sparql in terms of \emph{precision}.
This is because squall2sparql employs humans to identify the entity and predicate for each question.
We also found that KBQA performs the best over DBpedia than other knowledge bases. This is because the QALD benchmark is mainly designed for DBpedia. For most questions in QALD that KBQA can process KBQA can find the right answers from DBpedia.

\begin{table}[!htb]
\vspace{-0.2cm}
\scriptsize
\begin{center}
\caption{\small Results on QALD-5.}
\begin{tabular}{  p{1.6cm} | p{0.3cm} | l | p{0.3cm} | p{0.25cm}  p{0.45cm} | p{0.25cm}  p{0.45cm} | p{0.3cm} | p{0.3cm} }
\hline
 & \#pro & \#ri & \#par & R & & R$^*$ & & P & P$^*$  \\ \hline
 \hline
Xser & 42 & 26 & 7 & 0.52 & & 0.66 & & 0.62 & 0.79 \\ \hline
APEQ & 26 & 8  & 5 & 0.16 & & 0.26 & & 0.31 & 0.50  \\ \hline
QAnswer & 37 & 9 & 4  & 0.18 & & 0.26 & & 0.24 & 0.35 \\ \hline
SemGraphQA    & 31 & 7 & 3 & 0.14 & & 0.20 & & 0.23 & 0.32 \\ \hline
YodaQA        & 33 & 8 & 2 & 0.16 & & 0.20 & & 0.24 & 0.30 \\ \hline
 \hline
 \multicolumn{4}{c|}{}  & R & R$_{\text{BFQ}}$ & R$^*$ & R$^*_{\text{BFQ}}$ &  \multicolumn{2}{|c}{}  \\ \hline
KBQA+KBA              & 7 & 5 & 1  & 0.10 & 0.42 & 0.12 & 0.50 & 0.71 & 0.86 \\ \hline
KBQA+Freebase         & 6 & 5 & 1  & 0.10 & 0.42 & 0.12 & 0.50 & 0.83 & 1.00 \\ \hline
KBQA+DBpedia          & 8 & 8 & 0  & 0.16 & 0.67 & 0.16 & 0.67 & \textbf{1.00} & \textbf{1.00} \\ \hline
\end{tabular}
\label{tab:qald5}
\end{center}
\vspace{-0.6cm}
\end{table}

{
\setlength{\tabcolsep}{2.5pt}
\begin{table}[!htb]
\scriptsize
\begin{center}
\caption{\small Results on QALD-3.}
\begin{tabular}{  l | l | l | l | l | l | l | l | l | l | l | l}
\hline
 & \#pro & \#ri & \#par & R & R$_{\text{BFQ}}$ & R$^*$ & R$^*_{\text{BFQ}}$ & P & P$_{\text{BFQ}}$ & P$^*$ & P$^*_{\text{BFQ}}$ \\ \hline
 \hline
squall2sparql & 96 & 80 & 13 & .78 & .81 & .91 & .94 & .84 & .95 & .97 & .95 \\ \hline
SWIP          & 21 & 14 & 2  & .14 & .24 & .16 & .24 & .67 & .77 & .76 & .77 \\ \hline
CASIA         & 52 & 29 & 8  & .29 & .56 & .37 & \textbf{.61} & .56 & .79 & .71 & .86 \\ \hline
RTV           & 55 & 30 & 4  & .30 & .56 & .34 & .56 & .55 & .72 & .62 & .72 \\ \hline
gAnswer~\cite{zou2014natural}          & 76 & 32 & 11 & \textbf{.32} & .54 & \textbf{.43} & -  & .42 & .54 & .57  & - \\ \hline
Intui2        & 99 & 28 & 4  & .28 & .54 & .32 & .56 & .28 & .54 & .32 & .56 \\ \hline
Scalewelis    & 70 & 32  & 1 & \textbf{.32} & .41 & .33 & .41 & .46 & .50 & .47 & .5 \\ \hline
\hline
KBQA+KBA    & 25 & 17 & 2  & .17 & .42 & .19 & .46 & .68 & .68 & .76  & .76 \\ \hline
KBQA+FB         & 21 & 15 & 3  & .15 & .37 & .18 & .44 & .71 & .71 & .86 & .86 \\ \hline
KBQA+DBp          & 26 & 25 & 0  & .25 & \textbf{.61} & .25 & \textbf{.61} & \textbf{.96} & \textbf{.96} & \textbf{.96} & \textbf{.96}\\ \hline
\end{tabular}
\label{tab:qald3}
\end{center}
\vspace{-0.8cm}
\end{table}
}

\nop{
\begin{table}[!htb]
\scriptsize
\begin{center}
\begin{tabular}{  p{1.6cm} | p{0.3cm} | l | p{0.3cm} | p{0.25cm}  p{0.55cm} | p{0.25cm}  p{0.55cm} | p{0.3cm} | p{0.3cm} }
\hline
 & \#pro & \#ri & \#par & R & & R$^*$ & & P & P$^*$  \\ \hline
 \hline
squall2sparql & 96 & 77 & 13 & 0.78 & & 0.91 & & 0.80 & 0.94 \\ \hline
SWIP          & 21 & 14 & 2  & 0.14 & & 0.16 & & 0.67 & 0.76  \\ \hline
CASIA         & 52 & 29 & 8  & 0.29 & & 0.37 & & 0.56 & 0.71 \\ \hline
RTV           & 55 & 30 & 4  & 0.30 & & 0.34 & & 0.55 & 0.62 \\ \hline
gAnswer          & 76 & 32 & 11 & 0.32 & & 0.43 & & 0.42 & 0.57 \\ \hline
Intui2        & 99 & 28 & 4  & 0.28 & & 0.32 & & 0.28 & 0.32 \\ \hline
Scalewelis    & 70 & 1  & 38 & 0.01 & & 0.39 & & 0.01 & 0.56 \\ \hline
 \hline
 \multicolumn{4}{c|}{}  & R & R$_{BFQ}$ & R$^*$ & R$^*_{BFQ}$ &  \multicolumn{2}{|c}{}  \\ \hline

KBQA+KBA    & 25 & 17 & 2  & 0.17 & 0.42 & 0.19 & 0.46 & \textbf{0.68} & \textbf{0.76} \\ \hline
KBQA+Freebase         & 21 & 15 & 3  & 0.15 & 0.37 & 0.18 & 0.44 & \textbf{0.71} & \textbf{0.86} \\ \hline
KBQA+DBpedia          & 26 & 25 & 0  & 0.25 & 0.61 & 0.25 & 0.61 & \textbf{0.96} & \textbf{0.96} \\ \hline
\end{tabular}
\caption{\small Results on QALD-3.}
\label{tab:qald3}
\end{center}
\vspace{-0.5cm}
\end{table}
}

{\bf Recall Analysis} The results in Table~\ref{tab:qald5} and Table~\ref{tab:qald3} imply that KBQA has a relatively low recall. The major reason is, KBQA only answer BFQs (binary factoid questions), while the QALD benchmarks contain many non-BFQs. If we only consider BFQs, the recalls increases to 0.67, and 0.61, respectively (over DBpedia). Furthermore, we studied the cases when KBQA failed to answer a BFQ on QALD-3. Many cases fail because KBQA uses a relatively strict rule for template matching.
The failure case usually happens when a rare predicate is matched against a rare question template. 12 out of 15 questions fail due to this reason. One question example is {\tt In which military conflicts did Lawrence of Arabia participate}, whose predicate in DBpedia is {\it battle}. KBQA lacks training corpora to understand these rare predicates.



{\bf Results on QALD-1} We compared with DEANNA over BFQs in QALD-1 in Table~\ref{tab:quality}. DEANNA is a state-of-the-art synonym based approach, which also focuses on BFQs.
For DEANNA, $\#pro$ is the number of questions that are transformed into SPARQL queries.
The results show that the precision of KBQA is significantly higher than that of DEANNA. Since DEANNA is a typical synonym based approach, this verifies that template based approach is superior to the synonym based approach in terms of precision.

\begin{table}[!htb]
\vspace{-0.1cm}
\begin{center}
\scriptsize
\caption{\small Results on QALD-1.}
\begin{tabular}{  l  | p{0.3cm} | p{0.3cm} | p{0.3cm} | c |c  | c | c }
\hline
 & \#pro & \#ri & \#par & R$_{BFQ}$ & R$^*_{BFQ}$ &  P & P$^*$ \\ \hline
 \hline
DEANNA & 20 & 10 & 0 & 0.37 & 0.37 & 0.5 & 0.5\\ \hline
KBQA + KBA & 13 & \textbf{12} & 0 & \textbf{0.48} & \textbf{0.48} & \textbf{0.92} & \textbf{0.92} \\ \hline
KBQA + Freebase & 14 & \textbf{13} & 0 & \textbf{0.52} & \textbf{0.52} & \textbf{0.93} & \textbf{0.92} \\ \hline
KBQA + DBpedia & 20 & \textbf{18} & 1 & \textbf{0.67} & \textbf{0.70} & \textbf{0.90} & \textbf{0.95} \\ \hline
\end{tabular}
\label{tab:quality}
\end{center}
\vspace{-0.5cm}
\end{table}

{\bf Results on WEBQUESTIONS}
We show the results of KBQA on ``WebQuestions'' in Table~\ref{tab:webquestions}. We compared KBQA with some state-of-the-art systems by precision@1 (p@1), precision (P), recall (R), and F1 score as computed by the official evaluation script. The ``WebQuestions'' data set still has many non-BFQs. The results again show that KBQA is excellent at BFQ (the high precision for BFQs). The lower F1 score is caused by the recall, that KBQA cannot answer non-BFQs.

\begin{table}[!htb]
\small
\vspace{-0.3cm}
\begin{center}
\caption{\small Results on the WEBQUESTIONS test set.}
\begin{tabular}{l|l|l|l|l}
\hline
system & P & P@1 & R & F1 \\
\hline
\hline
(Bordes et al., 2014) \cite{DBLP:conf/emnlp/BordesCW14} & - & 0.40 & - & 0.39 \\ \hline
(Zheng et al., 2015) \cite{zheng2015build} & 0.38 & - &  - & - \\ \hline
(Li et al., 2015) \cite{dong2015question} & - & 0.45 & - & 0.41  \\ \hline
(Yao, 2015) \cite{yao2015lean} & 0.53 & - & 0.55 & \textbf{0.44} \\ \hline
KBQA  & \textbf{0.85} & \textbf{0.52} & 0.22 & 0.34 \\ \hline
\end{tabular}
\label{tab:webquestions}
\end{center}
\vspace{-0.6cm}
\end{table}

{\bf Results for hybrid systems} Even for a dataset that the BFQ is not a majority (e.g. WEBQUESTIONS, QALD-3), KBQA contributes by being an excellent component for hybrid QA systems.
We build the hybrid system by: first, the user question is first fed into KBQA. If KBQA gives no reply, which means the question is very likely to be a non-BFQ, we fed the question into the baseline system. We show the improvements of the hybrid system in Table~\ref{tab:hybrid}. The performances of all baseline systems significantly improve when working with KBQA. The results verify the effectiveness of KBQA for a dataset that the BFQ is not a majority.

{
\setlength{\tabcolsep}{2.5pt}
\begin{table}[!htb]
\scriptsize
\begin{center}
\caption{\small Results of hybrid systems on QALD-3 over DBpedia.}
\begin{tabular}{l|l|l|l|l}
\hline
System & R & R* & P & P*\\
\hline
\hline
SWIP & 0.15 & 0.17 & 0.71 & 0.81 \\
KBQA+SWIP & 0.33(+0.18) & 0.35(+0.18) & 0.87(+0.16) & 0.92(+0.11) \\
\hline
CASIA & 0.29 & 0.37 & 0.56 & 0.71\\
KBQA+CASIA & 0.38(+0.09) & 0.44(+0.07) & 0.66(+0.10) & 0.76(+0.05)\\
\hline
RTV & 0.3 & 0.34 & 0.34 & 0.62\\
KBQA+RTV & 0.39(+0.09) & 0.42(+0.08) & 0.66(+0.32) & 0.71(+0.09)\\
\hline
gAnswer & 0.32 & 0.43 & 0.42 & 0.57\\
KBQA+gAnswer & 0.39(+0.07) & - & - & -\\
\hline
Intui2 & 0.28 & 0.32 & 0.28 & 0.32\\
KBQA+Intui2 & 0.39(+0.11) & 0.41(+0.09) & 0.39(+0.11) & 0.41(+0.09)\\
\hline
Scalewelis & 0.32 & 0.33 & 0.46 & 0.47\\
KBQA+Scalewelis & 0.44(+0.12) & 0.45(+0.12) & 0.60(+0.14) & 0.62(+0.15)\\
\hline
\end{tabular}
\label{tab:hybrid}
\end{center}
\vspace{-0.8cm}
\end{table}
}


\nop{
{\bf Results on TREC 2007} We report the results on TREC 2007 in Table~\ref{tab:trec2007}. Compared with the top 5 systems in TREC 2007 QA track, KBQA still significantly outperforms others in terms of precision. Among 316 BFQs out of 360 factoid questions, KBQA processed 58, 59 questions on KBA and Freebase, respectively. We dropped the recall of other competitors, because (1) TREC 2007 is designed for QA systems over web docs. Hence, most questions are not covered by knowledge bases, which leads to lower recall of QA systems over knowledge bases; (2) TREC benchmark is used to evaluate systems for traditional IR-based QA tasks, in which recall is not a metric. So we cannot find the recalls of other competitors in the TREC report.
We use this experiment only to show that KBQA also works for IR-based QA tasks with high precision.


\begin{table}[!htb]
\small
\begin{center}
\begin{tabular}{  l  | c | c | c}
\hline
 & R & R$_{BFQ}$ & P\\ \hline
 \hline
LymbaPA07 & -& - & 0.707  \\ \hline
LCCFerret & -& - & 0.494  \\ \hline
lsv2007c & -& - & 0.289  \\ \hline
UofL & -& - & 0.258 \\ \hline
QASCU1 & -& - & 0.256  \\ \hline
\hline
KBQA + KBA & 0.16 & 0.18 & \textbf{0.759} \\ \hline
KBQA + Freebase & 0.16 & 0.19 & \textbf{0.712}\\ \hline
\end{tabular}
\caption{Results on TREC 2007.}
\label{tab:trec2007}
\end{center}
\vspace{-0.8cm}
\end{table}
}

\subsubsection{Effectiveness of Predicate Inference}
Next we justify the effectiveness of KBQA in predicate inference by showing that (1) KBQA learns rich templates and predicates for natural language questions (\emph{coverage}), and (2) KBQA infers the correct predicate for most templates (\emph{precision}).

{\bf Coverage}
We show the number of predicates and templates KBQA learns in Table~\ref{tab:templatecount}, with comparison to {\it bootstrapping}~\cite{yahya2012natural,unger2012template}, which is a state-of-the-art synonym based approach. Bootstrapping learns synonyms (BOA patterns, which mainly contains text between subjects and objects in web docs) for predicates from knowledge base and web docs. The BOA patterns can be seen as templates, and the relations can be seen as predicates.

{
\setlength{\tabcolsep}{2.5pt}
\begin{table}[!htb]
\vspace{-0.2cm}
\scriptsize
\begin{center}
\caption{\small Coverage of Predicate Inference}
\begin{tabular}{  l | c | c | c | c}
\hline
                        & KBQA & KBQA & KBQA  & Bootstrapping\\
                        & +KBA & +Freebase & +DBpedia & \\ \hline
\hline
  Corpus & 41M QA & 41M QA & 41M QA & 256M sentences \\ \hline
  Templates                     & \textbf{27126355}  & 1171303  & 862758 &   471920        \\ \hline
  Predicates   & \textbf{2782} & 4690 & 1434     & 283              \\ \hline
  Templates per predicate  & \textbf{9751}  &250 & 602 & 4639 \\  \hline

\end{tabular}
\label{tab:templatecount}
\end{center}
\vspace{-0.5cm}
\end{table}
}

The results show that KBQA finds significantly more templates and predicates than its competitors despite that bootstrapping uses larger corpus. This implies that KBQA is more effective in predicate inference: (1) the large number of templates ensures that KBQA understands diverse question templates; (2) the large number of predicates ensures that KBQA understands diverse relations. Since KBA is the largest knowledge base we use, and KBQA over KBA generates the most templates, we focus on evaluating KBA in the following experiments.

{\bf Precision}
Our goal is to evaluate whether KBQA generates a correct predicate for a given template.
For this purpose, we select the top 100 templates ordered by their frequencies. We also randomly select 100 templates with frequency larger than 1 (templates only occur once usually have very vague meanings). For each template $t$, we manually check whether the predicate $p$ with maximum $P(p|t)$ is correct.  Similar to the evaluation on QALD-3, in some cases the predicate is \emph{partially} right.
The results are shown in Table~\ref{tab:matchingrate}. On both template sets, KBQA achieves high precision. The precision of the top 100 templates even reaches 100\%. This justifies the quality of the inference from templates to predicates.

\begin{center}
\vspace{-0.2cm}
\small
\captionof{table}{\small Precision of Predicate Inference}
\begin{tabular}{ l | c | c | c | c }
\hline
Templates & $\#right$ & $\#partially$ & P & P$^*$ \\ \hline
\hline
Random 100 & 67 & 19 & 67\% & 86\% \\ \hline
Top 100 & 100 & 0 & 100\% & 100\% \\ \hline
\end{tabular}
\label{tab:matchingrate}
\vspace{-0.2cm}
\end{center}


\subsection{Efficiency}
\label{sec:exp:efficiency}
We first give the running time. The we present the time complexity analysis for KBQA.

{\bf Running Time Experiments} There two parts of the running time: offline and online. The offline procedure, which mainly learns templates, takes 1438 min. The time cost is mainly caused by the large scale of data: billions of facts in knowledge base and millions of QA pairs. Since the offline procedure only runs once, the time cost is acceptable. The online part is responsible for question answering. We present the online time cost of our solution in Table~\ref{tab:time} with the comparison to gAnswer and DEANNA.
KBQA takes 79ms, which is 13 times faster than gAnswer, and 98 times faster than DEANNA. This implies that KBQA efficiently supports real-time QA, which is expected in real applications.

\begin{table}[!htb]
\scriptsize
\begin{center}
\caption{Time cost}
\begin{tabular}{ c | c | c | c  }
  \hline
    & Time & \multicolumn{2}{c}{Time Complexity} \\ \hline
  \hline
  \multicolumn{2}{c|}{} & Question Understanding & Question Evaluation \\ \hline
 DEANNA  & 7738ms & NP-hard & NP-hard  \\ \hline
 gAnswer   & 990ms & $O(|V|^3)$ & NP-hard \\ \hline
 \multicolumn{2}{c|}{} & Question Parsing & Probabilistic Inference\\ \hline
 KBQA  & \textbf{79ms} & $O(|q|^4)$ & $O(|P|)$  \\ \hline
\end{tabular}
\label{tab:time}
\end{center}
\vspace{-0.8cm}
\end{table}

\vspace{-3mm}
\paragraph*{\bf Complexity Analysis}
We also investigate their time complexity in Table~\ref{tab:time}, where $|q|$ is the question length, and $|V|$ in gAnswer is the number of vertices in RDF graph. As can be seen,  all procedures of KBQA have polynomial time complexity, while both gAnswer and DEANNA suffer from some NP-hard procedures. The complexity of question understanding for gAnswer is $O(|V|^3)$. Such complexity is unacceptable on a billion scale knowledge base. In contrast, the complexity of KBQA is $O(|q|^4)$ and $O(|P|)$ ($|P|$ is the number of distinct predicates), which is independent of the size of the knowledge base. As shown in Sec~\ref{sec:complex:solution}, over 99\% questions' length is less than 23. Hence, KBQA has a significant advantage over its competitors in terms of time complexity.

\subsection{Detailed Components}
\label{sec:exp:components}

We investigate the effectiveness of the three specific components in KBQA: entity\&value identification (in Sec~\ref{sec:observationformalization}), complex question answering (in Sec~\ref{sec:complex}), and predicate expansion (in Sec~\ref{sec:expansion}).

{\bf Precision of Entity\&Value Identification}
Note that most previous studies focused solely on entity extraction and cannot be used to extract entity and value simultaneously. Hence, we can only compare to a state-of-the-art solution for entity identification, Stanford Named Entity Recognizer~\cite{finkel2005incorporating}.
We randomly select 50 QA pairs whose answers are covered by the knowledge base.
We manually check whether the extracted entity is correct. Our approach correctly identifies entities for
36 QA pairs (72\%), which is superior to Stanford NER that identifies entities correctly for only 15 QA pairs (30\%).
This result suggests that {\it joint extraction of entities is better than the independent extraction}.


\nop{
\begin{table}[!htb]
\small
\begin{center}
\begin{tabular}{  l | c | c }
\hline
 & KBQA & Stanford NER\\ \hline
\hline
$\#right$ & \textbf{36} & 15 \\ \hline
Accuracy & \textbf{72\%} & 30\% \\ \hline
\end{tabular}
\caption{Entity Detection Results}
\label{tab:entitydetection}
\end{center}
\vspace{-0.5cm}
\end{table}
}

{\bf Effectiveness to Answer Complex Questions}
Since no benchmark is available for complex question answering, we constructed 8 such questions as shown in Table~\ref{tab:cq}. All these questions are typical complex questions posed by real users. We compare KBQA with two state-of-the-art QA engines: Wolfram Alpha and gAnswer. Table~\ref{tab:cq} shows the result. We found that KBQA beats the strong competitors in answering complex questions. This implies that KBQA is effective in answering complex questions.

\begin{center}
\scriptsize
\captionof{table}{\small Complex Question Answering. WA stands for Wolfram Alpha, and gA stands for gAnswer.}
\begin{tabular}{ l | c | c | c}
\hline
Question & KBQA & WA & gA \\ \hline
\hline
How many people live in the capital of Japan? & Y & Y & N \\ \hline
When was Barack Obama's wife born? & Y & Y & N \\ \hline
What are books written by author of Harry Potter? & Y & N & N \\ \hline
What is the area of the capital of Britain? & Y & N & N\\ \hline
How large is the capital of Germany? & Y & N & N \\ \hline
What instrument do members of Coldplay play? & Y & N & N \\ \hline
What is the birthday of the CEO of Google? & Y & N & N \\ \hline
In which country is the headquarter of Google located? & Y & N & N \\ \hline
\end{tabular}
\label{tab:cq}
\vspace{-0.15cm}
\end{center}

\nop{
\begin{center}
\vspace{-0.3cm}
\small
\begin{tabular}{ l | c | c | c}
\hline
Question & KBQA & WA & G \\ \hline
\hline
How many people live in the capital of Japan? & Y & Y & Y \\ \hline
When was Barack Obama's wife born? & Y & Y & Y \\ \hline
What are books written by author of Harry Potter? & Y & N & N \\ \hline
What is the area of the capital of Britain? & Y & N & Y\\ \hline
How large is the capital of German? & Y & N & Y \\ \hline
\end{tabular}
\captionof{table}{Complex question answering. WA stands for Wolfram Alpha, and G stands for Google.}
\label{tab:cq}
\end{center}
}

{\bf Effectiveness of Predicate Expansion}
Next we show that our predicate expansion procedure is effective in two aspects.
First, the expansion can {\it find significantly more predicates}. Second, the expanded predicates enable KBQA to {\it learn more templates}. We present the evaluation results in Table~\ref{tab:effectivenesspe}. We found that (1) the expansion (with length varying from 2 to $k$) generates ten times the number of direct predicates (with length 1), and (2) with the expanded predicates, the number of templates increases by 57 times.

We further use two case studies to show (1) {\it the expanded predicates are meaningful} and (2) {\it the expanded predicates are correct}. We list 5 expanded predicates we learned in Table~\ref{tab:eep}. We found that
all these expanded predicates found by KBQA are meaningful.  We further choose one expanded predicate, $marriage \rightarrow person$ $\rightarrow name$, to see whether the templates learned for this predicate are correct or meaningful. We list five learned templates in Table~\ref{tab:spouse}. These templates in general are reasonable.



\begin{table}[!htb]
\vspace{-0.1cm}
	\begin{minipage}[t]{0.24\textwidth}
        \vspace{5pt}
		\scriptsize
		\centering
        \caption{\small Effectiveness of Predicate Expansion}
        \begin{tabular}{  l | c | c }
        \hline
                     \scriptsize{Length} & \scriptsize{$\#$Template} & \scriptsize{$\#$Predicate}  \\ \hline
        \hline
          $1$   & 467,393   & 246        \\ \hline
          $2$ to $k$ & 26,658,962 & 2536     \\ \hline
          Ratio & 57.0 & 10.3 \\ \hline
        \end{tabular}
        \label{tab:effectivenesspe}
	\end{minipage}
	\begin{minipage}[t]{0.23\textwidth}
        \vspace{5pt}
		\scriptsize
		\centering
        \caption{\small Templates for $marriage \rightarrow person \rightarrow name$ }
        \begin{tabular}{  l }
        \hline
        Who is \$person marry to? \\ \hline
        Who is \$person's husband? \\ \hline
        What is \$person's wife's name?    \\ \hline
        Who is the husband of \$person?  \\ \hline
        Who is marry to \$person? \\ \hline
        \end{tabular}
        \label{tab:spouse}
	\end{minipage}
\vspace{-0.5cm}
\end{table}



\begin{table}[!htb]
\small
\vspace{-0.1cm}
\begin{center}
\caption{Examples of Expanded Predicates}
\begin{tabular}{ l | l }
\hline
Expanded predicate & Semantic \\ \hline
\hline
marriage $\rightarrow$ person $\rightarrow$ name & spouse \\ \hline
organization\_members $\rightarrow$ member $\rightarrow$ alias & organization's member\\ \hline
nutrition\_fact $\rightarrow$ nutrient $\rightarrow$ alias & nutritional value \\ \hline
group\_member $\rightarrow$ member $\rightarrow$ name & group's member \\ \hline
songs $\rightarrow$ musical\_game\_song $\rightarrow$ name & songs of a game \\ \hline
\end{tabular}
\label{tab:eep}
\end{center}
\vspace{-0.9cm}
\end{table}

\nop{
\begin{itemize}
\item \emph{Exactly matched} This is a tighter match. We say templates $t$ and predicate $p$ are exact matched, if their semantics is exactly the same. For example, the templates \emph{When was \$person born} and the predicate \emph{date of birth} are exactly matched.
\item \emph{Over matched} This is a looser match. We say templates $t$ and predicate $p$ are over matched, if the semantic of $t$ is contained by $p$. For example, the templates \emph{Who is the predicate of \$country} is over matched by predicate \emph{politician}. Such over matched value is still informative, and that's all we can do under this framework.
\end{itemize}

Also, for two templates with different frequency, we surely think the accuracy of the one with higher frequency is more important. To show this preference, we give the weight of a template as its frequency. So we define the accuracy of the sampled templates as
$$accuracy=\frac{\sum_{t}frequency(t)correct(t,\argmax_{p}P(p|t))}{\sum_{t}frequency(t)}$$,

here $correct(t)$ is defined as:
\begin{subnumcases}
{correct(t,p)=}
1 & $t$ and $p$ are matched\\
0, &otherwise
\small
\end{subnumcases}
And whether $t$ and $p$ are matched is labeled manually.

The results are shown in Table.

\begin{table}[!htb]
\small
\begin{center}
\begin{tabular}{  c | c | c}
 & all templates & refined templates \\ \hline
 exact match & \textcolor{red}{tbd}   & \textcolor{red}{tbd}    \\ \hline
 over match & \textcolor{red}{tbd}  &  \textcolor{red}{tbd}   \\ \hline
\end{tabular}
\caption{Accuracy}
\label{tab:exampleep}
\end{center}
\end{table}

From the result,
}

\section{Related Works}
\label{sec:related}


{\bf Natural Language Documents vs Knowledge Base}
QA is very dependent on the quality of corpora. Traditional QA systems use web docs or Wikipedia as the corpora to answer questions. State-of-the-art methods in this category~\cite{ravichandran2002learning,kwok2001scaling,dang2007overview,hirschman2001natural} usually take the sentences from the web doc or Wiki as candidate answers, and rank them based on the relatedness of words between questions and candidate answers. They also tend to use noise reduction methods such as question classification~\cite{metzler2005analysis,zhang2003question} to increase the answer's quality. In recent years, the emergence of many large scale knowledge base, such as Google Knowledge Graph, Freebase~\cite{bollacker2008freebase}, and YAGO2\cite{hoffart2011yago2}, provide a new opportunity to build a better QA system~\cite{ou2008automatic,unger2011pythia,unger2012template,gerber2011bootstrapping,yahya2012natural,ferrucci2010building}. The knowledge base in general has a more structured organization and contains more clear and reliable answers compared to the free text based QA system.


{\bf QA Systems Running on Knowledge Base}
The core process of QA systems built upon knowledge base is the predicate identification for questions. For example, the question can be answered if we can  find the predicate ``population'' from question {\tt How many people are there in Honolulu}. The development of these knowledge bases experienced three major stages: \emph{rule based}, \emph{keyword based}, and \emph{synonym based} according to predicate identification approach.
Rule based approaches map questions to predicates using manually constructed rules. For example, Ou et al.~\cite{ou2008automatic} think the question in the form of {\tt What is the <xxx> of entity?} should be mapped to the predicate <xxx>. Manually constructed rules always have high precision but low recall.
Keyword based methods~\cite{unger2011pythia} use keywords or phrases in the questions as features to find the mappings between questions and predicates. But in general, it is difficult to use keywords to find mappings between questions and
complicated predicates. For example, it is hard to map question
  {\tt how many people are there in ...?} to the predicate
  ``population'' based on keywords such as ``how many'', ``people'',
  ``are there'', etc.
Synonym based approaches~\cite{unger2012template,yahya2012natural} extend keyword based methods by taking synonyms of the predicates into consideration.  This enables to answer more questions. The key factor of the approach is the quality of synonyms. Unger et al.~\cite{unger2012template} uses the bootstrapping~\cite{gerber2011bootstrapping} from web docs to generate synonyms. Yahya et al.~\cite{yahya2012natural} generates synonyms from Wikipedia. However, synonym based approaches still cannot answer complicated questions due to the same reason as the key word based approach.
True knowledge~\cite{tunstall2010true} uses key words/phrases to represent a template. In contrast, our template is a question with its entity replaced by its concept. We think True knowledge should be categorized as a synonym-based approach. To pursue a high precision, the CANaLI system~\cite{mazzeoanswering} guided users to ask questions with given patterns. However, it only answers questions with pre-defined patterns.

In general, previous QA systems over knowledge base still have certain weakness in precision or recall, since they cannot understand the complete question.


\nop{
{\bf RDF Data Management}
Many systems are built to support queries, especially  SPARQL queries, over RDF knowledge base. Most of them use a relational model to query RDF data. That is, the SPARQL queries are processed as large join queries. The performance relies on the SQL join optimization techniques. RDF-3x~\cite{neumann2010rdf} proposed sophisticated bushy-join planning and fast merge join for query answering. Thomas et al.~\cite{neumann2009scalable} developed light-weight methods to filter subjects, predicates, or object identifiers. The filters avoid some unnecessary index scan.
To further improve the scalability,  some distributed graph engines are employed for RDF management. 
One of typical work in this direction is Trinity.RDF~\cite{zeng2013distributed}. It is is built upon a distributed in-memory graph system Trinity~\cite{shao2013trinity}, which supports fast graph exploration as well as efficient parallel computing. Trinity.RDF uses the native graph form to store RDF data. Therefore it supports SPARQL queries without relying on join operation.
}

\vspace{-0.3cm}
\section{Conclusion}
\balance
\label{sec:conslusion}
QA over knowledge bases now becomes an important and feasible task. In this paper, we build a question answering system KBQA over a very large open domain RDF knowledge base.
Our QA system distinguishes itself from previous systems in four aspects: (1) understanding questions with templates, (2) using template extraction to learn the mapping from templates and predicates, (3) using expanded predicates in RDF to expand the coverage of knowledge base, (4) understanding complex questions to expand the coverage of the questions. The experiments show that KBQA is effective and efficient, and beats state-of-the-art competitors, especially in terms of precision.

{\bf Acknowledgement} This paper was supported by the National
Key Basic Research Program of China under No.2015CB358800,
by the National NSFC (No.61472085, U1509213), by Shanghai
Municipal Science and Technology Commission foundation key
project under No.15JC1400900, by Shanghai Municipal Science
and Technology project under No.16511102102. Seung-won Hwang was supported by IITP grant funded by the Korea government (MSIP; No. B0101-16-0307) and Microsoft Research.


{\small
\bibliographystyle{abbrv}
\bibliography{qa2}}




\end{document}